\documentclass{article}

\PassOptionsToPackage{numbers, compress}{natbib}
\usepackage[preprint]{neurips_2026}

\makeatletter
\renewcommand{\@notice}{}
\makeatother

\usepackage[utf8]{inputenc}
\usepackage[T1]{fontenc}
\usepackage{microtype}
\usepackage{graphicx}
\usepackage{subcaption}
\usepackage{wrapfig}
\usepackage{algorithm}
\usepackage{algorithmic}
\usepackage{booktabs}
\usepackage{hyperref}
\usepackage{url}
\usepackage{xcolor}
\usepackage{amsmath}
\usepackage{amssymb}
\usepackage{mathtools}
\usepackage{amsthm}
\usepackage{tabularx}
\usepackage{thmtools}
\usepackage{thm-restate}
\usepackage[capitalize,noabbrev]{cleveref}
\usepackage[textsize=tiny]{todonotes}
\usepackage{multirow}

\title{AT-SKM-Net: An Accelerated Trainable Sampling Kaczmarz-Motzkin Framework for Linear Hard-Constraint Feasibility on Dynamic Graphs}

\author{Xiaochen Zhang \quad Haoyu Zhu \quad Yao Zhang \quad
  Qingchun Hou\thanks{Corresponding author: \href{mailto:houqingchun@zju.edu.cn}{\texttt{houqingchun@zju.edu.cn}}.} \\
  Zhejiang University \\
  \small\texttt{xiaochen.23@intl.zju.edu.cn} \quad \texttt{haoyu.22@intl.zju.edu.cn} \\
  \small\texttt{eezhangyao@zju.edu.cn} \quad \texttt{houqingchun@zju.edu.cn}}

\hypersetup{pdftitle={AT-SKM-Net: An Accelerated Trainable Sampling Kaczmarz-Motzkin Framework for Linear Hard-Constraint Feasibility on Dynamic Graphs},
  pdfauthor={Xiaochen Zhang; Haoyu Zhu; Yao Zhang; Qingchun Hou}}

\theoremstyle{plain}

\theoremstyle{definition}

\newtheorem{assumption}{Assumption}
\theoremstyle{remark}

\begin{document}

\maketitle

\begin{abstract}
Graph-structured optimization with linear constraints is fundamental to critical infrastructure but faces scalability limits due to massive strict hard constraints and high dimensionality. While recent projection-based methods such as Trainable Sampling Kaczmarz-Motzkin Net (T-SKM-Net) guarantee feasibility, they face high computational costs in dynamic environments by processing the entire constraint set and requiring expensive matrix factorizations. To bridge this gap, we propose the Accelerated Trainable-SKM (AT-SKM) Net framework. To concentrate computation on the active constraints and eliminate redundant calculations, we introduce a hybrid sampling strategy guided by a topology-aware heterogeneous GNN model. To efficiently handle topological shifts in graph-based constraints, we employ a Cholesky Update mechanism that theoretically reduces the equality projection complexity from $\mathcal{O}(N^3)$ to $\mathcal{O}(N^2)$ under low-rank perturbations. Experiments on random geometric graphs, N-1 Security-Constrained DC-OPF, and minimum-cost gas transport problem demonstrate that AT-SKM reduces iteration counts by up to 85\% and achieves 2.95$\times$-7.29$\times$ SKM layer speedups, while maintaining zero constraint violations. \end{abstract}

\section{Introduction}
Constrained optimization is essential for decision-making in a wide range of critical engineering systems \cite{kotary2021end}. In particular, graph-based formulations are particularly critical in infrastructure domains, such as power system dispatch and transportation network flow control \cite{intro_graph_optimization}. These applications typically face a dual challenge: on one hand, decisions must strictly satisfy hard constraints imposed by physical laws or operational requirements, where even minor violations are unacceptable; on the other hand, real-world environments are highly dynamic, for instance, sudden line failures in power grids cause instantaneous changes to the underlying graph topology \cite{intro_n-1_contingency}. This demands that solvers adapt to topological changes and produce feasible solutions within milliseconds, yet traditional optimization algorithms struggle to meet real-time control requirements when confronted with large-scale networks and frequent topological perturbations.

In recent years, neural network has been widely employed to approximate solutions for various optimization problems, due to its powerful function approximation capabilities \cite{intro_DN_for_Opt,utkarsh2025end}. However, standard neural networks cannot inherently guarantee the feasibility of their outputs. Existing approaches to address this limitation fall into two categories. Soft constraint methods suppress violations through penalty terms but lack theoretical safety guarantees, posing significant risks in safety-critical applications \cite{fioretto2020lagrangian, chamon2022constrained, kotary2024learning}. Hard constraint methods, such as differentiable optimization layers \cite{optnet, cvxpylayers, hardnet} and projection-based approaches \cite{intro_projection, deepopf}, can enforce feasibility but often require solving complex embedded optimization subproblems during inference, facing severe computational scalability bottlenecks that hinder their application to large-scale graph data with thousands of nodes \cite{intro_decisionrule}.

Recently, \citet{t-skm} proposed a Trainable Sampling Kaczmarz-Motzkin Net (T-SKM-Net) framework which successfully enables end-to-end training while guaranteeing linear hard constraint satisfaction by incorporating the SKM algorithm. However, T-SKM-Net relies on SVD to handle equality constraint projection, and topological changes necessitate expensive recomputation. Moreover, in practical optimization problems, typically only a few constraints are active. Consequently, blind uniform sampling wastes substantial computation budget on redundant constraints, then limiting convergence speed.

To address these challenges, we propose AT-SKM-Net, a trainable hard-constraint correction framework for dynamic graph-structured optimization. Instead of simply treating feasibility correction as a post-processing step, AT-SKM-Net aligns the correction procedure with two structural properties of graph-based constrained problems, low-rank topology-induced perturbations and sparse active constraints. Our main contributions are as follows:

\begin{itemize}
    \item We formulate a graph structure-aware trainable framework that preserves the hard-feasibility guarantee of projection-based SKM while solving the real-time scalability bottlenecks caused by large-scale constraints and topological perturbations.
    \item We accelerate the equality projection and inequality iteration bottleneck stages of SKM-based correction. For equality projection, we introduce Cholesky updates, reducing the adaptation complexity from $\mathcal{O}(N^3)$ to $\mathcal{O}(N^2)$ under low-rank topological perturbations. For inequality iteration, we introduce active-set-guided hybrid sampling to focus computation on likely binding constraints.
    \item We develop a heterogeneous graph neural network (HGNN) tailored to dynamic topologies to provide both warm-start candidates and active-set scores for AT-SKM-Net. Experiments demonstrate that AT-SKM-Net achieves substantial speedups while maintaining zero equality and inequality constraint violations.
\end{itemize}

\section{Related Work}

Existing work has made significant progress in improving the feasibility of solutions produced by neural solvers and traditional optimization methods. The literature relevant to our work can be broadly categorized into three distinct streams: neural methods for hard constraint satisfaction, acceleration strategies for the SKM algorithm, and applications of active-set prediction.

\subsection{Neural Methods for Hard Constraint Satisfaction}

\textbf{Differentiable Optimization Layer-Based Methods.} These approaches integrate optimization problems or iterative operations directly into the neural inference pipeline. While exact solvers like OptNet \cite{optnet} and CvxPyLayers \cite{cvxpylayers} enable differentiability for QP and general convex problems, their high computational complexity limits scalability. SCQPTH \cite{scqpth} accelerates solving via first-order ADMM, it remains restricted to convex QP formulations. Similarly, iterative projection methods such as LinSATNet \cite{linsatnet} and the entropy-regularized GLinSAT \cite{intro_DN_for_Opt} achieve constraint satisfaction but incur high computational costs due to the necessity of extensive iterations.

\textbf{Feasible Set Parameterization Methods.} These methods map the latent space onto the feasible region. CP \cite{frerix2020homogeneous} parameterizes solutions for linear inequalities but struggles with high dimensionality. While Rayen \cite{tordesillas2023rayen} and scaling-based approaches \cite{konstantinov2023new} extend support to quadratic and other convex constraints, they rely on a pre-identified strictly feasible interior point, rendering them ill-suited for constraints that vary dynamically with inputs. Safety decision rules \cite{intro_decisionrule} ensure feasibility via convex combinations of a solver and a safety network, but they require convex constraint sets and inherently compromise optimality.

\textbf{Correction and Projection Methods.} These methods correct infeasible predictions through post-processing or iterative updates. DC3 \cite{dc3} employs equality completion and inequality correction but strictly guarantees only equality satisfaction. ProjectNet \cite{projectnet} utilizes alternating projections but is limited to simple non-negative constraints. \citet{rashwan2025enforcing} enforce hard linear constraints via the Component-Averaged Dykstra algorithm. T-SKM-Net \cite{t-skm} achieves rigorous satisfaction of both equality and inequality constraints via null-space projection and SKM iterations. However, the use of SVD factorization and uniform constraint sampling may not fully utilize the structure of sparse active constraint sets and constraint matrix, limiting inference efficiency.

\subsection{Acceleration Methods for SKM}

Among efforts to accelerate Kaczmarz-type iterative methods, the Greedy Kaczmarz algorithm \cite{nutini2016convergence} has been proven to achieve the theoretically optimal convergence rate. However, each iteration requires scanning all constraint rows to find the maximum residual, resulting in prohibitively high per-step computational cost that hinders application to large-scale problems. While Randomized Block SKM \cite{zhang2023randomized} achieves faster convergence rates through the parallel iteration of multiple constraints, it is plagued by the difficulty of tuning parameters like sampling scale and block size. Probably Accelerated SKM (PASKM) \cite{paskm} incorporates Nesterov momentum into the SKM framework, leveraging historical gradient information to accelerate convergence toward the feasible region. Nevertheless, this method still employs uniform sampling and does not fundamentally address the inefficiency of constraint sampling. 

\subsection{Application of Active-Set Prediction}
\citet{xiang2016screening} demonstrated that active set prediction can effectively reduce both the computational time and memory usage required for solving Lasso problems. Similarly, \citet{misra2022learning} leveraged active set prediction to mitigate the complexity of constrained optimization; however, their approach necessitates solving reduced KKT systems for all identified candidate active sets, which can lead to substantial computational overhead if the prediction accuracy is insufficient. \citet{statistical_learning} propose ensemble-based active set prediction, predicting active constraints at the optimal solution based on input parameters. However, this method assumes the constraint system itself remains unchanged and cannot handle active set shifts caused by topological variations. \citet{gnn_warmstart_qp} proposed using GNN to predict active sets for warm-starting conventional solvers.

\section{Preliminaries}

Given a graph $\mathcal{G}$, consider the objective function $F$ and the feasibility of the state vector $\mathbf{x}\in\mathbb{R}^{n}$ subject to linear constraints:
\begin{equation}
\label{eq:problem_formulation}
\begin{aligned}
\min_{\mathbf{x}} \quad & F(\mathcal{G},\mathbf{x}) \\
\text{s.t.}\quad 
& A(\mathcal{G})\,\mathbf{x}\le \mathbf{b}, \, C(\mathcal{G})\,\mathbf{x}= \mathbf{d}
\end{aligned}
\end{equation}
where the constraint matrices $A(\mathcal{G})\in\mathbb{R}^{n_{\mathrm{ineq}}\times n}$ and $C(\mathcal{G})\in\mathbb{R}^{n_{\mathrm{eq}}\times n}$, with $\mathbf{b}\in\mathbb{R}^{n_{\mathrm{ineq}}}$ and $\mathbf{d}\in\mathbb{R}^{n_{\mathrm{eq}}}$ representing $n_{\mathrm{ineq}}$ inequality constraints and $n_{\mathrm{eq}}$ equality constraints, respectively.

The SKM with null space projection method first obtains an equality-feasible point by projecting an initial guess $\mathbf{x}_0$ onto the equality-constraint hyperplane via: \cite{t-skm}
\begin{equation}
\mathbf{x}_{\mathrm{eq}}=\mathbf{x}_0-C^\dagger(C\mathbf{x}_0-\mathbf{d}).
\end{equation}
Let $N\in\mathbb{R}^{n\times r}$ be a basis for $\mathrm{null}(C)$. Any equality-feasible iterate admits $\mathbf{x}=\mathbf{x}_{\mathrm{eq}}+N\boldsymbol{\omega}$ with $\boldsymbol{\omega}\in\mathbb{R}^{r}$. 

Let $\tilde{A} = AN$ with rows $\tilde{\mathbf{a}}_i^\top$, and $\tilde{\mathbf{b}} = \mathbf{b} - A\mathbf{x}_{\mathrm{eq}}$. Using a sampled constraint set $\mathcal{S}_k\subseteq\{1,\dots,n_{\mathrm{ineq}}\}$ and the hinge operator $[t]_+\triangleq \max\{t,0\}$, the SKM update on the null-space coefficients is:
\begin{equation}
\label{eq:nsskm_omega_form}
\begin{aligned}
i_k &\in \arg\max_{i\in\mathcal{S}_k}\; \Big[\tilde{\mathbf{a}}_i^\top \boldsymbol{\omega}_k-\tilde{b}_i\Big]_+,\\
\boldsymbol{\omega}_{k+1}
&=\boldsymbol{\omega}_k-\alpha\,
\frac{\Big[\tilde{\mathbf{a}}_{i_k}^\top \boldsymbol{\omega}_k-\tilde{b}_{i_k}\Big]_+}{\|\tilde{\mathbf{a}}_{i_k}\|_2^2}\,\tilde{\mathbf{a}}_{i_k},
\\
\mathbf{x}_{k+1}&=\mathbf{x}_{\mathrm{eq}}+N\boldsymbol{\omega}_{k+1}, \quad \boldsymbol{\omega}_0=\mathbf{0},
\end{aligned}
\end{equation}

where $\alpha\in(0,2)$ is the step-size. \cref{eq:nsskm_omega_form} enforces $C\mathbf{x}_k=\mathbf{d}$ while progressively reducing violations of the transformed inequalities.

\section{Accelerated Trainable-SKM Framework}

\begin{wrapfigure}{r}{0.48\textwidth}
  \centering
  \includegraphics[width=\linewidth]{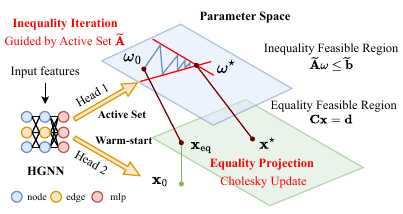}
  \caption{
    Flowchart of the AT-SKM-Net.
  }
  \label{fig:flowchart}
  \vspace{0.32in}
\end{wrapfigure}

We propose the Accelerated Trainable Sampling Kaczmarz-Motzkin Net (AT-SKM-Net) framework to resolve the conflict between strict feasibility guarantees and real-time scalability in dynamic environments. As shown in Figure~\ref{fig:flowchart}, given a graph instance with updated topology and constraint features, the HGNN first produces a warm-start candidate and active-set scores. The warm-start solution is then projected onto the equality-constraint manifold, and SKM iterations are performed in the null space to remove the remaining inequality violations. The complete inference procedure is summarized in Appendix~\ref{app:atskm_algorithm}. The framework aims to accelerate these two stages by using complementary mechanisms for equality projection and inequality iteration to achieve end-to-end speedup:

\textbf{Inequality SKM Iteration: Sampling Acceleration.} To overcome the inefficiency of uniform sampling, we introduce a hybrid strategy guided by Active-Set Prediction in Sec.~\ref{sec:active_set_sampling}. By leveraging topological priors to identify binding constraints, we effectively shrink the search space from the massive global set to the local active manifold. We provide the theoretical analysis in Sec.~\ref{sec:theoretical_analysis}.

\textbf{Equality Projection: Computing Acceleration.} To address the bottleneck of SVD refactorization under low-rank constraint matrix perturbations, we propose a Cholesky Update mechanism in Sec.~\ref{sec:cholesky_update}. This utilizes the low-rank nature of constraint matrix to theoretically reduce the equality projection complexity from $\mathcal{O}(N^3)$ to $\mathcal{O}(N^2)$.

These modules are powered by our proposed Heterogeneous GNN in Sec.~\ref{sec:hgnn}, which encodes the dynamic graph structure to warm-start the solver and guide the optimization process.

\subsection{Accelerated Sampling for SKM Iteration: Active-Set Prediction}
\label{sec:active_set_sampling}

We address the structural inefficiency of standard SKM methods arising from the mismatch between constraints sparsity and sampling uniformity. The geometry of the feasible region is typically characterized by a sparse set of active constraints, while the vast majority of constraints are redundant at the solution. However, standard SKM remains oblivious to this structure and samples uniformly from the total constraint set $m$. This kind of blind search leads to inefficient sampling, where the likelihood of selecting an informative constraint is significantly diminished by the large number of inactive constraints. 

To bridge this gap, we propose a neural-network-based hybrid sampling strategy that concentrates sampling probability on the active constraints, theoretically shifting the convergence dependence from the total number of constraints $m$ and the global Hoffman constant to the reduced active set size $|\hat{\mathcal{A}}|$ and the local condition number. 

\subsubsection{Hybrid Sampling Strategy via Predicted Active-sets}

Consider the linear inequality constraints over the polyhedron:
\begin{equation}
    P=\{\mathbf{x}\in\mathbb{R}^n \mid A\mathbf{x}\le \mathbf{b}\},
\end{equation}
where $A\in\mathbb{R}^{m\times n}$ and $\mathbf{a}_i^\top$ denotes the $i$-th row of $A$. Let $\mathbf{x}^*$ be the targeted feasible solution.

At $\mathbf{x}^*$, the active constraint set is defined as
\begin{equation}
    \mathcal{A}^*(\mathbf{x}^*) = \{ i \in \{1, \dots, m\} \mid \mathbf{a}_i^\top \mathbf{x}^* = b_i \}.
\end{equation}
In a neighborhood of $\mathbf{x}^*$, the feasible region is locally characterized by the affine manifold
$A_{\mathcal{A}^*}\mathbf{x}=\mathbf{b}_{\mathcal{A}^*}$ since other inequality constraints are still slack. 

To address the structural sampling inefficiency, we propose a hybrid sampling strategy guided by neural network predictions. This strategy aims to concentrate computational resources on constraints with high active probability, while retaining a component of uniform sampling to guarantee theoretical global convergence.

Let $\phi(\mathcal{G})$ be the neural network model. For an input optimization instance $\mathcal{G}$, the model outputs unnormalized logits $\mathbf{z} \in \mathbb{R}^m$. We transform the predicted active scores into a sampling distribution $\mathbf{p}$ via global normalization:
\begin{equation}
p_i = \frac{\sigma(z_i)}{\sum_{j=1}^m \sigma(z_j)}, \quad \forall i \in \{1, \dots, m\}.
\end{equation}

In each SKM iteration $k$, with a total batch size $\beta$ and a mixing ratio $\rho \in (0, 1]$, the sampled index set $\tau_k$ is constructed from two components:

\textbf{Prediction-Guided Exploitation ($\tau_{\text{learn}}$).}
    We select $\beta_1 = \lfloor \rho \cdot \beta \rfloor$ indices by weighted sampling based on distribution $\mathbf{p}$. This step mimics sampling from the predicted active set $\hat{\mathcal{A}}$, aiming to accelerate convergence by significantly reducing the effective sampling space dimension from $m$ to approximately $|\hat{\mathcal{A}}|$.
    
\textbf{Uniform Exploration ($\tau_{\text{unif}}$).}
    We uniformly sample $\beta_2 = \beta - \beta_1$ indices from the remaining constraint set. This step acts as a safeguard, preventing the omission of true active constraints due to prediction errors and ensuring the global robustness of the algorithm.

The final sampled constraint subset for iteration $k$ is $\tau_k = \tau_{\text{learn}} \cup \tau_{\text{unif}}$. This hybrid approach effectively leverages prior geometric information while overcoming the potential blind spots of purely prediction-based methods.

\subsection{Theoretical Analysis}
\label{sec:theoretical_analysis}

To quantify the acceleration achieved by dimension reduction, we introduce the following assumptions regarding the local geometry and the predictor's capability:

\begin{assumption}[Local Active-Set Stability]
    \label{ass:local_stability}
    There exists a neighborhood $\mathcal{U}$ of $\mathbf{x}^*$ such that for any iterate $\mathbf{x} \in \mathcal{U}$, the set of active constraints remains invariant at the projection, which means that all constraints inactive at $\mathbf{x}^*$ remain slack at $\Pi_P(\mathbf{x})$, and $\Pi_P(\mathbf{x})$ lies on the affine subspace defined by $\mathcal{A}^*(\mathbf{x}^*)$. That is, $\mathcal{A}(\Pi_P(\mathbf{x})) = \mathcal{A}^*(\mathbf{x}^*)$.
\end{assumption}

\begin{assumption}[Effective Active-Set Covering]
\label{ass:active_covering}
We assume the predictor identifies a candidate set $\hat{\mathcal{A}}$ that satisfies two conditions:
\begin{itemize}
    \item \textbf{Coverage}: It includes the true active set, i.e., $\mathcal{A}^* \subseteq \hat{\mathcal{A}}$.
    \item \textbf{Sparsity}: Its size is significantly smaller than the total number of constraints, i.e., $|\hat{\mathcal{A}}| \ll m$.
\end{itemize}
\end{assumption}

We provide a unified theoretical analysis of the acceleration mechanism. We first establish the global convergence guarantee of the algorithm, and then quantify its acceleration relative to standard SKM in the local linear regime.

\begin{restatable}[Global Convergence and Robustness]{theorem}{globalrobustness}
\label{thm:global_robustness_unified}
Provided that the mixing ratio satisfies $\rho < 1$, the proposed algorithm maintains global linear convergence in expectation toward the feasible set $P$.
\end{restatable}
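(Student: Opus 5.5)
The plan is to follow the standard SKM convergence argument of De Loera–Haddock–Needell, built on the Hoffman bound. The uniform-exploration component supplies a minimum probability of sampling every constraint, independent of the predictor's quality.

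\textbf{Step 1 (geometry and Hoffman bound).} Fix an iterate $\mathbf{x}_k$ and let $\mathbf{x}^\star_k=\Pi_P(\mathbf{x}_k)$. Let $\mathbf{r}_k=(A\mathbf{x}_k-\mathbf{b})_+$ be the residual vector. Hoffman's lemma gives a constant $L>0$ with
\[
\|\mathbf{x}_k-\Pi_P(\mathbf{x}_k)\|_2^2\le L^2\|\mathbf{r}_k\|_2^2 .
\]
For the selected row $i_k$, the relaxed projection step with $\alpha\in(0,2)$ satisfies the standard identity
\[
d(\mathbf{x}_{k+1},P)^2\le d(\mathbf{x}_k,P)^2-\alpha(2-\alpha)\frac{[\mathbf{a}_{i_k}^\top\mathbf{x}_k-b_{i_k}]_+^2}{\|\mathbf{a}_{i_k}\|_2^2}.
\]
This holds because $\mathbf{x}^\star_k$ lies in the half-space of constraint $i_k$. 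In the null-space form of \cref{eq:nsskm_omega_form}, the same computation applies with $\tilde A,\tilde{\mathbf b}$. I will normalize rows, or carry $\max_i\|\tilde{\mathbf a}_i\|^2$ as a constant.

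\textbf{Step 2 (lower bound on expected decrease via the uniform component).} Since $\rho<1$, $\beta_2=\beta-\lfloor\rho\beta\rfloor\ge1$. So $\tau_{\mathrm{unif}}$ contains at least one index drawn uniformly from the constraints not already chosen. The Motzkin rule takes the maximal violation over $\tau_k\supseteq\tau_{\mathrm{unif}}$. Therefore the selected violation is at least the largest violation in $\tau_{\mathrm{unif}}$, whatever $\tau_{\mathrm{learn}}$ contains. This is the key monotonicity: adding the learned indices can only increase the max.

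I would then show there is $\gamma>0$ with $\mathbb{P}(j\in\tau_{\mathrm{unif}})\ge\gamma$ for every $j$. The main obstacle is that the uniform draw is from the \emph{remaining} set, so the inclusion probability for $j$ is conditional on $j\notin\tau_{\mathrm{learn}}$. I would handle this by case analysis. If $j\in\tau_{\mathrm{learn}}$, then $j$ is in $\tau_k$ anyway. Otherwise $j$ is drawn with probability at least $\beta_2/m$. Either way $\mathbb{P}(j\in\tau_k)\ge\beta_2/m=:\gamma$.

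Taking $j^\dagger$ to be the most violated row gives
\[
\mathbb{E}\big[[\mathbf a_{i_k}^\top\mathbf x_k-b_{i_k}]_+^2\big]\ge\gamma\|\mathbf r_k\|_\infty^2\ge\frac{\gamma}{m}\|\mathbf r_k\|_2^2 .
\]
A sharper constant is available from the standard SKM order-statistics bound with batch size $\beta_2$. This is optional.

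\textbf{Step 3 (combine).} Plugging Steps 1–2 together yields
\[
\mathbb{E}\big[d(\mathbf x_{k+1},P)^2\mid\mathbf x_k\big]\le\Big(1-\frac{\alpha(2-\alpha)\beta_2}{m^2L^2\max_i\|\mathbf a_i\|^2}\Big)d(\mathbf x_k,P)^2 .
\]
Taking total expectation and iterating gives global linear convergence in expectation. The rate depends only on $\beta_2$, $m$, $L$ and $\alpha$, not on the predictor $\mathbf p$. This is the sense in which the method is robust: a badly wrong predictor cannot destroy convergence, and it only replaces part of the batch.

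Finally, I would note that equality feasibility is preserved at every step because iterates stay in $\mathbf x_{\mathrm{eq}}+\mathrm{range}(N)$. Hence distance to $P$ in $\boldsymbol\omega$-coordinates controls distance to the full feasible set when $N$ has orthonormal columns, as with an SVD-based null-space basis.
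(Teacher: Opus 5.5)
Your proof is correct and follows essentially the paper's route. It uses the same three ingredients: the relaxed-projection decrease $\alpha(2-\alpha)g_{i_k}$, the fact that $\rho<1$ forces $\beta_{\mathrm{unif}}\ge 1$ so the maximum over $\tau_k$ dominates the maximum over $\tau_{\mathrm{unif}}$, and the Hoffman error bound.

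The one difference is how you lower-bound the expected selected violation. You use the inclusion probability $\beta_2/m$ of the single worst row together with $\|\mathbf r_k\|_\infty^2\ge\|\mathbf r_k\|_2^2/m$. The paper instead uses ``maximum $\ge$ average over $\tau_{\mathrm{unif}}$'' with inclusion probability $\beta_{\mathrm{unif}}/m$ for every row, which gives the sharper factor $1/m$ where you get $\beta_2/m^2$. In exchange, your case analysis also covers the reading in which $\tau_{\mathrm{unif}}$ is drawn only from indices not already in $\tau_{\mathrm{learn}}$.
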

\begin{proof}[\textbf{Proof Sketch}]
The proof relies on the positive sampling probabilities. Since $\rho < 1$, every constraint retains a strictly positive probability of being sampled. This satisfies the sufficient condition for global linear convergence established in standard randomized Kaczmarz theory \cite{haddock2021SKM, de2017sampling}, ensuring robustness against prediction errors. Complete proof in Appendix~\ref{app:active_sampling_proof}.
\end{proof}

\begin{restatable}[Efficiency Gain via Dimension Reduction]{theorem}{efficiencygain}
\label{thm:comparative_acceleration_corrected}
Consider the iterate in the local linear regime from Assumption~\ref{ass:local_stability} with row-normalized constraints. Comparing uniform sampling over $\{1, \dots, m\}$ and our proposed uniform sampling over the predicted set $\hat{\mathcal{A}}$, under Assumption~\ref{ass:active_covering}, the expected one-step error contraction rates are:

\begin{itemize}
    \item \textbf{Standard SKM (Uniform on $m$):}
    \begin{equation}
        \mathbb{E}[\|\mathbf{e}_{k+1}\|^2] \le \left( 1 - \frac{1}{m \cdot \mathcal{H}^2(A)} \right) \|\mathbf{e}_k\|^2
    \end{equation}
    \item \textbf{Accelerated SKM (Uniform on $\hat{\mathcal{A}}$):}
    \begin{equation}
        \mathbb{E}[\|\mathbf{e}_{k+1}\|^2] \le \left( 1 - \frac{\sigma_{\min}^2(A_{\mathcal{A}^*})}{|\hat{\mathcal{A}}|} \right) \|\mathbf{e}_k\|^2
    \end{equation}
\end{itemize}
where $\mathcal{H}(A)$ is the global Hoffman constant, and $\sigma_{\min}(A_{\mathcal{A}^*})$ denotes the smallest non-zero singular value of the true active submatrix.
\end{restatable}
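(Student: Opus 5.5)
We analyse a single SKM step from the iterate $\mathbf{x}_k$ and compare the two sampling distributions. Define the error as $\mathbf{e}_k=\mathbf{x}_k-\Pi_P(\mathbf{x}_k)$, with rows normalised so that $\|\mathbf{a}_i\|=1$, and take the step size $\alpha=1$ (general $\alpha\in(0,2)$ only rescales the constants by $\alpha(2-\alpha)$). The basic tool is the projection identity. If the step uses a violated row $i$, the new point $\mathbf{x}_{k+1}$ is the projection of $\mathbf{x}_k$ onto the halfspace $\{\mathbf{a}_i^\top\mathbf{x}\le b_i\}$, and that halfspace contains $\Pi_P(\mathbf{x}_k)$. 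Hence
\[
\|\mathbf{x}_{k+1}-\Pi_P(\mathbf{x}_k)\|^2=\|\mathbf{e}_k\|^2-[\mathbf{a}_i^\top\mathbf{x}_k-b_i]_+^2 .
\]
Since $\|\mathbf{e}_{k+1}\|\le\|\mathbf{x}_{k+1}-\Pi_P(\mathbf{x}_k)\|$, it suffices to lower bound the expected squared residual removed in one step. The Motzkin maximum over the sample is at least the residual of any single sampled row, so it is enough to bound the expected residual of one uniformly drawn row. This reduces each case to bounding an average of squared residuals from below by a multiple of $\|\mathbf{e}_k\|^2$.

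\textbf{Standard SKM.} Drawing $i$ uniformly from $\{1,\dots,m\}$ gives
\[
\mathbb{E}\big[[\mathbf{a}_i^\top\mathbf{x}_k-b_i]_+^2\big]=\tfrac1m\|(A\mathbf{x}_k-\mathbf{b})_+\|^2 .
\]
Hoffman's error bound, $\mathrm{dist}(\mathbf{x},P)\le\mathcal{H}(A)\|(A\mathbf{x}-\mathbf{b})_+\|$, then yields the factor $1-1/(m\mathcal{H}^2(A))$. This is the classical argument of \cite{haddock2021SKM, de2017sampling}, so we simply cite it.

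\textbf{Accelerated SKM.} The key input is Assumption~\ref{ass:local_stability}. With $\mathbf{y}=\Pi_P(\mathbf{x}_k)$, it gives $A_{\mathcal{A}^*}\mathbf{y}=\mathbf{b}_{\mathcal{A}^*}$, with every constraint outside $\mathcal{A}^*$ slack at $\mathbf{y}$. The normal cone of $P$ at $\mathbf{y}$ is therefore generated by the rows of $A_{\mathcal{A}^*}$, so $\mathbf{e}_k=\mathbf{x}_k-\mathbf{y}=A_{\mathcal{A}^*}^\top\boldsymbol{\lambda}$ for some $\boldsymbol{\lambda}\ge0$. In particular $\mathbf{e}_k\in\mathrm{row}(A_{\mathcal{A}^*})$, and this gives
\[
\|A_{\mathcal{A}^*}\mathbf{e}_k\|^2\ge\sigma_{\min}^2(A_{\mathcal{A}^*})\|\mathbf{e}_k\|^2
\]
with $\sigma_{\min}$ the smallest nonzero singular value. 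For $i\in\mathcal{A}^*$ we have $\mathbf{a}_i^\top\mathbf{x}_k-b_i=\mathbf{a}_i^\top\mathbf{e}_k$.

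Using the coverage condition in Assumption~\ref{ass:active_covering}, $\mathcal{A}^*\subseteq\hat{\mathcal{A}}$, the uniform draw from $\hat{\mathcal{A}}$ removes in expectation at least
\[
\tfrac{1}{|\hat{\mathcal{A}}|}\sum_{i\in\mathcal{A}^*}[\mathbf{a}_i^\top\mathbf{e}_k]_+^2 .
\]

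\textbf{Main obstacle.} We must replace $[\cdot]_+^2$ by $(\cdot)^2$, i.e.\ show that the active components of $A_{\mathcal{A}^*}\mathbf{e}_k$ are nonnegative. This is not automatic: $\boldsymbol{\lambda}\ge0$ does not imply $A_{\mathcal{A}^*}A_{\mathcal{A}^*}^\top\boldsymbol{\lambda}\ge0$. Our first attempt is to read Assumption~\ref{ass:local_stability} as the stronger local-linear-regime statement used in the theorem. Under that reading, near $\mathbf{x}^*$ the iterates violate the active constraints, i.e.\ $\mathbf{x}_k$ lies on the infeasible side of each active hyperplane, which gives $A_{\mathcal{A}^*}\mathbf{x}_k\ge\mathbf{b}_{\mathcal{A}^*}$ componentwise. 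The positive parts then coincide with the full residuals, and the sum equals $\|A_{\mathcal{A}^*}\mathbf{e}_k\|^2$. If this reading is too strong, the fallback is to work with the affine manifold itself: measure $\mathbf{e}_k$ as the distance to $\{A_{\mathcal{A}^*}\mathbf{x}=\mathbf{b}_{\mathcal{A}^*}\}$, and run SKM with absolute residuals, as for an equality system in Kaczmarz's original setting. That yields exactly the equality-system rate $\sigma_{\min}^2/|\hat{\mathcal{A}}|$.

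In either reading we obtain
\[
\mathbb{E}[\|\mathbf{e}_{k+1}\|^2]\le\big(1-\sigma_{\min}^2(A_{\mathcal{A}^*})/|\hat{\mathcal{A}}|\big)\|\mathbf{e}_k\|^2 .
\]
The proof finishes by remarking that the rows in $\hat{\mathcal{A}}\setminus\mathcal{A}^*$ can only decrease the error further, since each such step is again a projection onto a halfspace containing $\mathbf{y}$.
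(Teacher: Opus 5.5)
Your route is the paper's: a per-step projection decrease, an average over a single uniform draw, Hoffman's bound for sampling on $\{1,\dots,m\}$, and, for sampling on $\hat{\mathcal{A}}$, a collapse of the sum to $\mathcal{A}^*$ followed by a spectral bound. You are more careful than the paper in two places it passes over silently:
\begin{itemize}
\item you measure $\mathbf{e}_k$ to $\Pi_P(\mathbf{x}_k)$ rather than to a fixed $\mathbf{x}^*$ (Hoffman only controls $\mathrm{dist}(\mathbf{x}_k,P)$);
\item you use the normal cone to place $\mathbf{e}_k$ in $\mathrm{row}(A_{\mathcal{A}^*})$, which is what licenses the \emph{smallest nonzero} singular value.
\end{itemize}
A minor point: your displayed identity is in general only ``$\le$'', with equality when row $i$ is active at $\Pi_P(\mathbf{x}_k)$. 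That is the direction you need, so nothing breaks.

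\textbf{The gap.} The genuine gap is the step you call the main obstacle, and neither of your readings closes it.
\begin{itemize}
\item Reading (a) is an extra hypothesis. Knowing that $\Pi_P(\mathbf{x}_k)$ has active set $\mathcal{A}^*$ only puts $\mathbf{e}_k$ in the cone of the active rows. Such a vector can strictly satisfy an active constraint: take two unit normals at an obtuse angle and $\mathbf{e}_k=\mathbf{a}_1+\epsilon\mathbf{a}_2$ with small $\epsilon>0$; then $\mathbf{a}_2^\top\mathbf{e}_k<0$.
\item Reading (b) analyses a different algorithm. With the hinge, drawing a satisfied active row produces no step, whereas an absolute-residual step would move.
\end{itemize}
The paper does not supply the missing idea either. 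It simply asserts $[\mathbf{a}_i^\top\mathbf{x}_k-b_i]_+^2=(\mathbf{a}_i^\top\mathbf{e}_k)^2$ for $i\in\mathcal{A}^*$.

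\textbf{Fix under full row rank.} If $A_{\mathcal{A}^*}$ has full row rank, no sign condition is needed; use $\boldsymbol{\lambda}\ge 0$ directly:
\[
\|\mathbf{e}_k\|^2=\boldsymbol{\lambda}^\top A_{\mathcal{A}^*}\mathbf{e}_k\le\boldsymbol{\lambda}^\top[A_{\mathcal{A}^*}\mathbf{e}_k]_+\le\|\boldsymbol{\lambda}\|\,\|[A_{\mathcal{A}^*}\mathbf{e}_k]_+\|,
\qquad
\|\mathbf{e}_k\|=\|A_{\mathcal{A}^*}^\top\boldsymbol{\lambda}\|\ge\sigma_{\min}(A_{\mathcal{A}^*})\|\boldsymbol{\lambda}\|.
\]
Hence $\|[A_{\mathcal{A}^*}\mathbf{e}_k]_+\|^2\ge\sigma_{\min}^2(A_{\mathcal{A}^*})\|\mathbf{e}_k\|^2$, and together with your other steps this proves the accelerated bound.

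\textbf{Counterexample without full row rank.} Without full row rank the stated bound is false. In $\mathbb{R}$ take the constraints $x\le 0$ and $-x\le 0$ (plus far-away slack rows if you want $m$ large), with $\hat{\mathcal{A}}=\mathcal{A}^*=\{1,2\}$.
\begin{itemize}
\item Assumption~\ref{ass:local_stability} holds everywhere, and $\sigma_{\min}^2(A_{\mathcal{A}^*})=2$, so the claimed factor is $1-2/2=0$.
\item From $x_k=\epsilon>0$, a uniform draw hits the violated row only with probability $1/2$, so $\mathbb{E}\|\mathbf{e}_{k+1}\|^2=\tfrac12\|\mathbf{e}_k\|^2$.
\item The absolute-residual variant of your reading (b) does attain factor $0$ here, which confirms it is a different statement.
\end{itemize}

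This is not an edge case of the assumption. Read as a condition on every $\mathbf{x}\in\mathcal{U}$, feasible points included, Assumption~\ref{ass:local_stability} forces $P$ to coincide near $\mathbf{x}^*$ with $\{A_{\mathcal{A}^*}\mathbf{x}=\mathbf{b}_{\mathcal{A}^*}\}$. Consequently, for $\mathcal{A}^*\neq\emptyset$, it forces linearly dependent active rows, which is exactly the regime where the bound fails.

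\textbf{What the statement needs.} Either of the following repairs works:
\begin{itemize}
\item add a nondegeneracy hypothesis (LICQ on $\mathcal{A}^*$, with the active-set condition imposed only on the iterate at hand);
\item replace $\sigma_{\min}^2(A_{\mathcal{A}^*})$ by $1/\mathcal{H}^2(A_{\mathcal{A}^*})$, which Hoffman's bound applied to the active subsystem $\{A_{\mathcal{A}^*}\mathbf{x}\le\mathbf{b}_{\mathcal{A}^*}\}$ yields in general.
\end{itemize}
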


\begin{proof}[\textbf{Proof Sketch}]
The acceleration stems from the error contraction behavior in the local linear regime by Assumption \ref{ass:local_stability}. In neighborhood $\mathcal{U}$, the hinge loss for any inactive constraint has $[a_i^\top x - b_i]_+ = 0$. Consequently, the summation term in the expected update collapses from the full constraint set to the active set $\mathcal{A}^*$. Complete proof in Appendix~\ref{app:active_sampling_proof}.
\end{proof}

This theorem shows the acceleration through two complementary mechanisms: sampling efficiency and geometric regularization. First, the reduction in the denominator from $m$ to $|\hat{\mathcal{A}}|$ reflects the compression of the sampling space, yielding significant acceleration whenever $|\hat{\mathcal{A}}| \ll m$. Second, the convergence rate is governed by the singular value $\sigma_{\min}$ of the true active set $\mathcal{A}^*$ rather than the Hoffman constant $\mathcal{H}$, which accounts for the worst-case conditioning among all submatrices \cite{error_bounds_hoffman}. 

\subsection{Accelerated Computing for Equality Projection: Cholesky Update}
\label{sec:cholesky_update}

In T-SKM-Net, the SKM layer first projects the neural prediction $\mathbf{x}_0$ onto the equality manifold  $\mathbf{C}\mathbf{x}=\mathbf{d}$ and then performs inequality iterations in the null space of C. This equality projection is  essential for preserving feasibility, but it becomes a major bottleneck in dynamic graph problems. When the graph topology changes, the equality matrix C also changes, and the SVD-based projection  used in T-SKM-Net has to be recomputed under an expensive $\mathcal{O}(n^3)$ computational cost for each perturbed instance~\cite{graph_laplacian_low_rank}. 

However, mild topological changes, such as line outages or reconnections, typically induce localized modifications in the adjacency matrix or the graph Laplacian. Motivated by this observation, we propose an incremental Cholesky update framework that exploits such structured changes to accelerate the equality-constraint projection phase.

Given a row full-rank matrix $\mathbf{C} \in \mathbb{R}^{m \times n}$, a rank-1 perturbation defined by $\tilde{\mathbf{C}} = \mathbf{C} + \mathbf{u}\mathbf{v}^\top$ induces a rank-2 modification in its corresponding Gram matrix $\mathbf{G} = \mathbf{C}\mathbf{C}^\top$. To handle this efficiently, we bypass computationally expensive SVD re-computations in favor of a generic incremental Cholesky update strategy. By sequentially applying one rank-1 update and one rank-1 downdate to the original $\mathbf{G}=\mathbf{L}\mathbf{L}^\top$, we can directly evolve the triangular factor $\tilde{\mathbf{L}}$ to adapt to the new topology without reconstructing the matrix from scratch.

Based on the updated factor $\tilde{\mathbf{L}}$, we transform the explicit matrix inversion into two efficient triangular matrix solves. For the constraint residual $\mathbf{r} = \tilde{\mathbf{C}}\mathbf{x}_0 - \mathbf{d}$, the projected solution can be expressed as:
\begin{equation}
\tilde{\mathbf{x}}_\text{eq} = \mathbf{x}_0 - \tilde{\mathbf{C}}^\top (\tilde{\mathbf{L}}^{-\top} (\tilde{\mathbf{L}}^{-1} \mathbf{r}))
\end{equation}

The null-space representation used by the subsequent SKM iterations is updated consistently with the perturbed equality constraints. Instead of reconstructing the whole null-space basis from scratch, we reuse the previous basis and correct only the low-dimensional subspace affected by the perturbation~\cite{nocedal2006numerical}. Intuitively, a low-rank change in $\mathbf{C}$ only alters a small number of directions in the null space, while the remaining directions can be retained after an orthogonal alignment. The explicit construction of this basis update is provided in Appendix~\ref{app:cholesky_imple}.

\paragraph{Computational Complexity.} While the SVD approach necessitates a full re-factorization with cubic complexity $\mathcal{O}(N^3)$, our proposed framework significantly reduces the computational burden by exploiting the low-rank perturbations of constraint matrix. By replacing global decomposition with localized Cholesky rank-1 updates and efficient basis rotations, we effectively decouple the high-dimensional problem into sequential vector operations and triangular solves. This strategy lowers the overall complexity to $\mathcal{O}(N^2)$, offering a theoretical speedup of order $N$ compared to standard methods. Description with more details is in Appendix~\ref{app:choleskycomplexity}.

\subsection{Topology-Aware Heterogeneous Graph Neural Network for Active-Set Prediction}
\label{sec:hgnn}

The hybrid sampling strategy proposed in \cref{sec:active_set_sampling} relies on precise estimation of active probabilities for individual constraints, which poses two key challenges: capturing the heterogeneity of constraints, and generalizing across varying topological scenarios. To address such challenges, we propose a heterogeneous graph neural network (HGNN) architecture that aligns with the structure of constraints and topology of graph-constrained optimization problems.

\subsubsection{Heterogeneous Graph Representation}

In graph-based optimization problems, constraints are naturally partitioned into node-level constraints and edge-level constraints. Homogeneous GNNs treat edges as message-passing channels, which tends to smooth edge-specific information into node representations \cite{hgnn-opf}. This smoothing effect hinders the extraction of distinct active probabilities for edge-based couplings.

To address this, we adopt a heterogeneous bipartite graph architecture where both original nodes and edges are modeled as distinct computational entities. This design establishes a structural correspondence between neural network embeddings and the optimization constraint set. Let $\mathbf{h}_u^{(l)}$ and $\mathbf{h}_{e}^{(l)}$ denote the latent representations of node $u$ and edge $e=(u,v)$ at layer $l$. The heterogeneous message passing updates are:
\begin{equation}
\mathbf{h}_{e}^{(l+1)} = \phi_{\text{edge}} \left( \mathbf{h}_{e}^{(l)}, \mathbf{h}_{u}^{(l)}, \mathbf{h}_{v}^{(l)} \right), \qquad
\mathbf{h}_{u}^{(l+1)} = \phi_{\text{node}} \left( \mathbf{h}_{u}^{(l)}, \sum_{e \in \mathcal{N}(u)} \mathbf{h}_{e}^{(l+1)} \right),
\label{eq:hetero_update}
\end{equation}
where $\phi_{\text{edge}}$ and $\phi_{\text{node}}$ are nonlinear transition functions. This formulation preserves the representation $\mathbf{h}_e$ of each coupling constraint independently throughout the propagation depth, rather than absorbing it into nodal states.

\subsubsection{Static-Dynamic Feature Separation}

When the graph topology changes, the neural model needs to generalize across these perturbations. We observe that problem inputs can be decomposed into two categories: static geometric parameters $\mathbf{z}_{\text{stat}}$ that define the constraint coefficient matrix $A$, and dynamic instance conditions $\mathbf{z}_{\text{dyn}}$ that define the boundary vector $\mathbf{b}$.

To prevent static structural information from being diluted by dynamic variations during deep message passing, we employ a structure-aware normalization mechanism. The mechanism first concatenates the dynamic hidden features $\mathbf{h}^{(l)}$ with static physical features $\mathbf{z}_{\text{stat}}$, then generates the normalization parameters through graph convolution:
\begin{equation}
\mathbf{h}^{(l+1)} = \gamma(\mathbf{h}^{(l)}, \mathbf{z}_{\text{stat}}) \odot \frac{\mathbf{h}^{(l)} - \mu}{\sigma} + \beta(\mathbf{h}^{(l)}, \mathbf{z}_{\text{stat}}),
\label{eq:topology_injection}
\end{equation}
where the affine parameters $\gamma$ and $\beta$ are generated from the concatenation $[\mathbf{h}^{(l)} \| \mathbf{z}_{\text{stat}}]$. This formulation is related to adaptive normalization method proposed by \cite{GRANOLA}, but instead of injecting random noise, we condition on the global static topology of the graph. This helps ensure that predictions remain consistent with constraints of $\mathcal{G}$ when its topology changes.

\subsubsection{Output Heads}

The HGNN serves as a generator for the accelerated solver. Its final embeddings are fed into two output heads. A regression head predicts the primal variable candidate $\hat{\mathbf{x}}_0$ to provide a warm start for the SKM layer. A classification head generates active scores for both node and edge constraints. The sampling distribution $\mathbf{p}$ required by the hybrid strategy is obtained via global normalization:
\begin{equation}
p_k = \frac{\sigma(\text{MLP}(\mathbf{h}_k))}{\sum_{j \in \mathcal{V} \cup \mathcal{E}} \sigma(\text{MLP}(\mathbf{h}_j))},
\label{eq:prob_output}
\end{equation}
where $k$ indexes over all node and edge constraints.

\section{Experiments}
\label{sec:exp}

We evaluate the proposed AT-SKM method on random geometric graphs and N-1 security-constrained DC optimal power flow problem, and provide an additional validation on the minimum-cost gas transport problem in Appendix~\ref{appendix:min_cost}. All experiments are conducted on an Apple M4 chip.

\subsection{Equality Projection on Random Geometric Graphs}

We validate the asymptotic computational complexity of the proposed Cholesky update method for equality-constraint projection using random geometric graphs (RGGs) of varying sizes. In an RGG, nodes are uniformly distributed within the unit square, with edges connecting pairs whose Euclidean distance falls below a threshold $r$; this local connectivity structure closely resembles real-world networks where physical proximity constrains connections. 

\begin{wrapfigure}{r}{0.48\textwidth}
  \vspace{-0.12in}
  \centering
  \includegraphics[width=\linewidth]{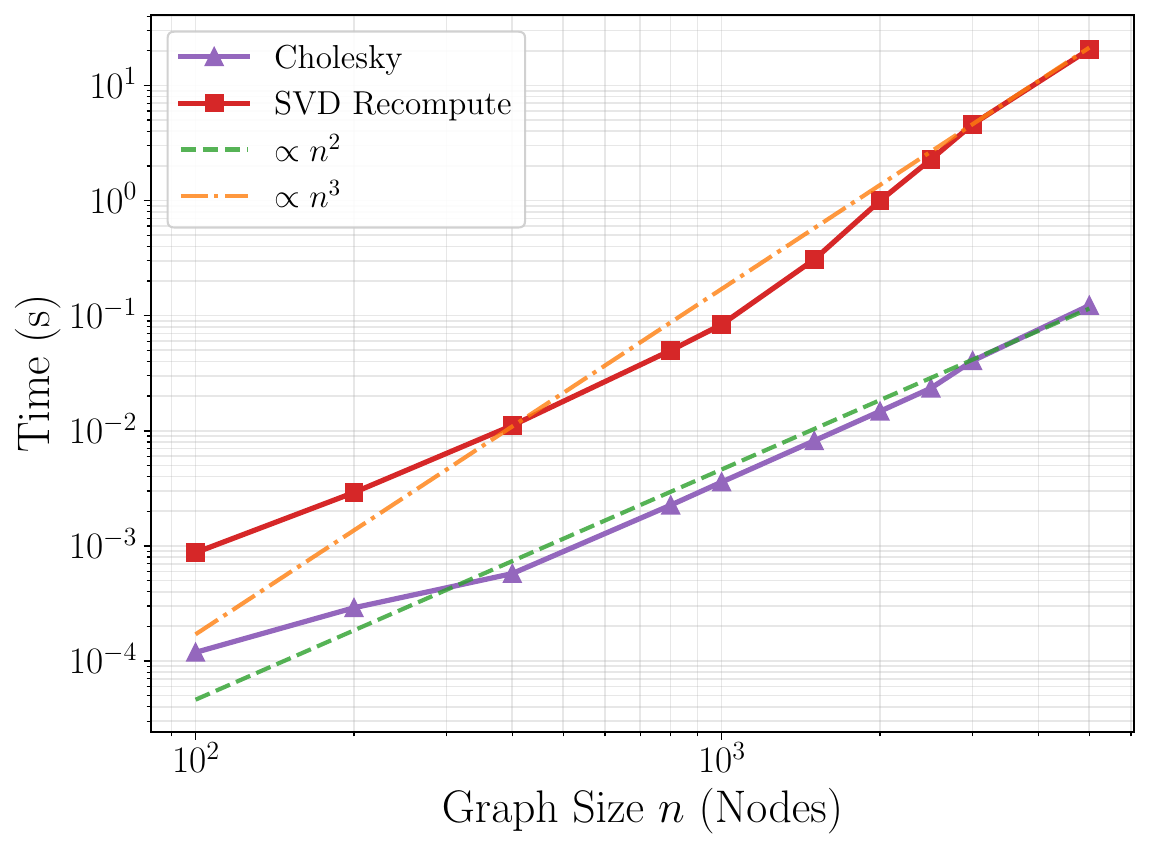}
  \caption{
    Computational complexity comparison between SVD recomputation and Cholesky update on RGGs. SVD exhibits $\mathcal{O}(n^3)$ scaling while Cholesky update achieves $\mathcal{O}(n^2)$.
  }
  \label{fig:Complexity_comparision}
  \vspace{-0.22in}
\end{wrapfigure}

\cref{fig:Complexity_comparision} compares the computation time of the Cholesky Update method, which incrementally updates the initial factorization, against SVD recomputation that performs a full decomposition for each topology change. On a log-log scale, the SVD computation time closely follows the $\mathcal{O}(N^3)$ reference curve as the number of nodes increases. Meanwhile, the empirical runtime of our proposed method aligns well with $\mathcal{O}(N^2)$. Notably, when the graph size reaches approximately 2000 nodes, SVD requires seconds to complete, which is prohibitive for safety-critical real-time applications such as power system dispatch. In contrast, the Cholesky update maintains computation times on the order of $10^{-2}$ seconds, demonstrating a substantial efficiency advantage. A detailed breakdown of these computational overheads is provided in Appendix~\ref{app:rgg_analysis}.

\subsection{N-1 Security-Constrained DC Optimal Power Flow}

We evaluate the performance of our proposed method on the N-1 Security-Constrained DC Optimal Power Flow (N-1 SC DC-OPF) problem. This problem stands as a fundamental challenge in critical power system infrastructure and represents a distinct class of graph-based constrained optimization characterized by strict safety requirements and dynamic network topologies \cite{dcopf_importance}.

The primary objective is to minimize costs subject to physical constraints, including power balance equations, generator output limits, and transmission line thermal limits. The rigorous N-1 security criterion mandates that the system must maintain feasibility following the loss of a single transmission component~\cite{sc-dcopf}. The detailed mathematical formulation is provided in Appendix~\ref{appendix:scopf}.

The experiment is based on IEEE 57-bus, 118-bus, and 300-bus systems. While we consider all feasible N-1 contingencies, we include only a subset of these scenarios in the training set to assess the model's adaptability and generalization to unseen topologies. The testing set evaluates the solver's performance in terms of cost optimization and constraint satisfaction across both seen and unseen topologies. The ground truth solutions for all instances are generated using the PyPower library \cite{pypower}.

We benchmark our method against: (1) \textbf{PyPower}: A traditional interior-point solver serving as the ground truth; (2) \textbf{NN} / \textbf{HGNN}: End-to-end learning baselines without explicit constraint enforcement; (3) \textbf{DC3}: A differentiable optimization method guaranteeing equality constraints via projection; and (4) \textbf{T-SKM-Net}: The original Trainable SKM framework. To decouple the contributions of our core modules, we evaluate three variants: \textbf{AT-SKM-A} (Active-set prediction only), \textbf{AT-SKM-C} (Cholesky update only), and \textbf{AT-SKM-AC} (Full).

\begin{table*}[tb]
  \caption{Performance comparison across different methods. The Optimality Gap and Constraint Violations (tol=$10^{-5}$) are formatted as ``Seen / Unseen''. The ``Iter. Num.'' column reports the mean (max) number of inequality iterations. The SKM-related metrics are reported on unseen scenarios to demonstrate robust acceleration.}
  \label{tab:dcopf_comp}
  \centering
  \begin{scriptsize}
    \setlength{\tabcolsep}{2.5pt}
    \begin{sc}
      \begin{tabularx}{\textwidth}{lccccccc}
        \toprule
        Method &
        \shortstack[c]{Tot. Time (ms)} &
        \shortstack[c]{Iter. Num.} &
        \shortstack[c]{SKM Time} &
        \shortstack[c]{SKM Speedup} &
        \shortstack[c]{Opt. Gap (\%)} &
        \shortstack[c]{Max Eq. Vio.} &
        \shortstack[c]{Max Ineq. Vio.} \\
        \midrule
        \multicolumn{8}{c}{\textbf{IEEE 57-Bus System: $n=64$, $n_{\text{eq}}=58$, $n_{\text{ineq}}=174$}}\\
        \midrule
        PyPower  & 14.785  & - & - & - & 0 / 0 & 0 / 0 & 0 / 0\\
        NN       & 0.048 & - & - & - & 0.173 / 5.379 & 24.77 / 53.07 & 0.101 / 192.71\\
        HGNN & 4.735 & - & - & - & 0.071 / 0.061 & 17.52 / 5.163 & 0.061 / 0.041 \\
        DC3  & 6.278 & - & - & - & 0.013 / 0.013 & 0 / 0 & 0.067 / 0.051\\
        T-SKM-Net &  5.831 & 6.93 (148) & 1.096 & 1$\times$ & 0.008 / 0.008 & 0 / 0 & 0 / 0\\
        AT-SKM-A & 5.202 & \textbf{1.42 (11)} &  0.467 & 2.35$\times$  & 0.008 / 0.008 & 0 / 0 & 0 / 0\\
        AT-SKM-C & 5.353 & 6.93 (148) & 0.618 & 1.77$\times$ & 0.008 / 0.008 & 0 / 0 & 0 / 0\\
        AT-SKM-AC & 5.009 & \textbf{1.42 (11)} & \textbf{0.274} & \textbf{4.00$\times$} & 0.008 / 0.008 & 0 / 0 & 0 / 0\\
        \midrule
        \multicolumn{8}{c}{\textbf{IEEE 118-Bus System: $n=172$, $n_{\text{eq}}=119$, $n_{\text{ineq}}=480$}}\\
        \midrule
        PyPower  & 31.557  & - & - & - & 0 / 0 & 0 / 0 & 0 / 0\\
        NN       & 0.057 & - & - & - & 0.142 / 1.481 & 72.59 / 93.21 & 0.031 / 664.11\\
        HGNN & 6.938 & - & - & - & 0.017 / 0.016 & 61.06 / 67.36 & 0.010 / 0.047 \\ DC3  & 8.040 & - & - & - & 0.019 / 0.019 & 0 / 0 & 0.011 / 0.009 \\
        T-SKM-Net & 10.251 & 34.60 (141) & 3.313 & 1$\times$ & 0.026 / 0.025 & 0 / 0 & 0 / 0\\
        AT-SKM-A & 9.349 & \textbf{15.68 (57)} &  2.411 & 1.37$\times$  & 0.026 / 0.025 & 0 / 0 & 0 / 0\\
        AT-SKM-C & 8.942 & 34.60 (141) & 2.004 & 1.65$\times$ & 0.026 / 0.025 & 0 / 0 & 0 / 0\\
        AT-SKM-AC & 8.060 & \textbf{15.68 (57)} & \textbf{1.122} & \textbf{2.95$\times$} & 0.026 / 0.025 & 0 / 0 & 0 / 0\\
        \midrule
        \multicolumn{8}{c}{\textbf{IEEE 300-Bus System: $n=369$,  $n_{\text{eq}}=301$, $n_{\text{ineq}}=960$}}\\
        \midrule
        PyPower  & 64.398 & - & - & - & 0 / 0 & 0 / 0 & 0 / 0\\
        NN     & 0.078 & - & - & - & 0.057 / 0.367 & 2.00e+3 / 9.44e+3 & 1.53e+3 / 8.52e+3\\
        HGNN & 9.390 & - & - & - & 0.068 / 0.068 & 64.61 / 62.36 & 537.66 / 0.086\\
        DC3  & 16.138 & - & - & - & 0.102 / 0.102 & 0 / 0 & 0.471 / 0.465\\
        T-SKM-Net & 18.382 & 24.72 (285) & 8.992 & 1$\times$ & 0.108 / 0.108 & 0 / 0 & 0 / 0\\
        AT-SKM-A & 17.264 & \textbf{3.68 (9)} & 7.874 & 1.14$\times$ & 0.108 / 0.108 & 0 / 0 & 0 / 0\\
        AT-SKM-C & 12.179 & 24.72 (285) & 2.789 & 3.22$\times$ & 0.110 / 0.110 & 0 / 0 & 0 / 0\\
        AT-SKM-AC & 10.623 & \textbf{3.69 (9)} & \textbf{1.233} & \textbf{7.29$\times$} & 0.110 / 0.110 & 0 / 0 & 0 / 0\\
        \bottomrule
      \end{tabularx}
    \end{sc}
  \end{scriptsize}
  \vskip -0.25in
\end{table*}

As shown in \cref{tab:dcopf_comp}, our proposed method achieves significant speedups over the T-SKM-Net baseline across all system sizes, while strictly satisfying all equality and inequality constraints without compromising solution quality. Regarding solution quality, although unconstrained baselines NN and HGNN achieve small optimality gaps, they exhibit varying degrees of constraint violations, rendering them unsafe for real-world operations. Moreover, the NN shows poor generalization on unseen topologies. While DC3 satisfies equality constraints, it fails to eliminate inequality violations.

We further analyze the isolated impacts of active-set prediction and Cholesky updates:

\textbf{Active-Set Prediction:} AT-SKM-A reduces the average iteration count by 80\%, 55\%, and 85\% on the 57, 118, and 300-bus systems, respectively. This confirms that our hybrid sampling strategy effectively concentrates computational resources on the sparse subset of active constraints.

\textbf{Cholesky Update:} AT-SKM-C significantly accelerates the projection phase. In the 300-bus system, although the results indicate that the Cholesky update does not affect the inequality iteration rate, AT-SKM-C reduces the SKM runtime from 8.99ms to 2.79ms ($3.22\times$ speedup), whereas AT-SKM-A reduces iterations by 85\% but only yields a $1.14\times$ speedup. This reveals that in large-scale systems, the computational bottleneck shifts from the iterative search to linear algebraic operations, highlighting the $\mathcal{O}(n^2)$ advantage of our Cholesky update.

Interestingly, we observe a synergistic effect over these two methods. The total speedup of AT-SKM-AC approximates or even exceeds the product of the individual speedups. For instance, on the 57-bus system, $S_A (2.35\times) \times S_C (1.77\times) \approx S_{AC} (4.00\times)$, and on the 300-bus system, the combined speedup ($7.29\times$) significantly surpasses the product of individual gains $S_A(1.14\times) \times S_C(3.22\times) \ll S_{AC}(7.29\times)$, suggesting a super-linear efficiency improvement.

Further experiments are provided in the appendix: hybrid sampling sensitivity (\ref{appendix:hybrid_sampling_sensitivity}), detailed runtime breakdown (\ref{appendix:time_breakdown_dcopf}), high-load robustness (\ref{appendix:high_load_robustness}), active-set prediction quality (\ref{appendix:active_set_quality}), active-set perturbation (\ref{appendix:active_set_perturbation}), and transfer learning with pseudo-label active-set training (\ref{appendix:transfer_pseudo_label}).

\section{Conclusion}
  \vskip -0.1in
In this paper, we propose the AT-SKM framework to effectively address the trade-off between strict feasibility and real-time scalability in graph-structured optimization. By combining neural network active-set predictions with efficient Cholesky incremental updates, AT-SKM-Net overcomes the computational bottlenecks that limit the original projection-based T-SKM-Net in dynamic environments.

\bibliography{Main}
\bibliographystyle{unsrtnat}

\newpage
\appendix
\section{AT-SKM-Net Algorithm Flow and Theoretical Proofs}

\subsection{AT-SKM-Net Algorithm Flow}
\label{app:atskm_algorithm}

\begin{algorithm}[H]
  \caption{Pseudo-code of the proposed AT-SKM-Net Framework.}
  \label{alg:atskm}
  \footnotesize
  \begin{algorithmic}[1]
    \REQUIRE Constraint data $(A,b,C,d)$, predictor $\phi$, sampling batch size $\beta$, mixing ratio $\rho$, maximum iterations $K$
    \ENSURE Feasible solution $x_K$
    \STATE $(x_0,s) \leftarrow \phi(A,b,C,d)$
    \STATE $p_i \leftarrow \sigma(s_i)/\sum_{j=1}^{m}\sigma(s_j),\ i=1,\ldots,m$
    \STATE \textbf{Equality projection stage.}
    \STATE $L \leftarrow \textsc{CholeskyUpdate}(C)$ such that $LL^\top = CC^\top$
    \STATE Solve $LL^\top y=Cx_0-d$ and set $x_{\mathrm{eq}}=x_0-C^\top y$
    \STATE $N \leftarrow \textsc{NullSpaceUpdate}(C,L),\quad CN=0$
    \STATE $\widetilde{A}\leftarrow AN,\quad \widetilde{b}\leftarrow b-Ax_{\mathrm{eq}},\quad \omega_0\leftarrow 0$
    \STATE \textbf{Inequality iteration stage.}
    \FOR{$k=0,1,\ldots,K-1$}
      \STATE $\beta_{\mathrm{act}}\leftarrow \lfloor\rho\beta\rfloor,\quad \beta_{\mathrm{unif}}\leftarrow \beta-\beta_{\mathrm{act}}$
      \STATE Sample $\tau_{\mathrm{act}}$ from $\{1,\ldots,m\}$ according to $p$ with size $\beta_{\mathrm{act}}$
      \STATE Uniformly sample $\tau_{\mathrm{unif}}$ from $\{1,\ldots,m\}$ with size $\beta_{\mathrm{unif}}$
      \STATE $\tau_k\leftarrow \tau_{\mathrm{act}}\cup\tau_{\mathrm{unif}}$
      \STATE $i_k\leftarrow\arg\max_{i\in\tau_k}[\widetilde{a}_i^\top\omega_k-\widetilde{b}_i]_+$
      \IF{$\widetilde{a}_{i_k}^\top\omega_k>\widetilde{b}_{i_k}$}
        \STATE $\omega_{k+1} =\omega_k-\alpha\frac{\widetilde a_{i_k}^\top \omega_k-\widetilde b_{i_k}}{\|\widetilde a_{i_k}\|_2^2}\widetilde a_{i_k},\qquad \alpha\in(0,2).$
      \ELSE
        \STATE $\omega_{k+1}\leftarrow\omega_k$
      \ENDIF
    \ENDFOR
    \STATE $x_K\leftarrow x_{\mathrm{eq}}+N\omega_K$
    \STATE \RETURN $x_K$
  \end{algorithmic}
\end{algorithm}

\subsection{Accelerated Sampling by Active-Set Prediction}
\label{app:active_sampling_proof}

\globalrobustness*

\begin{proof}
The AT-SKM algorithm iterates on the null-space coefficients $\omega$. Let $\Omega = \{\omega \in \mathbb{R}^r \mid \tilde{A}\omega \le \tilde{b}\}$ be the feasible set in the null-space representation, where $\tilde{A} = AN$ and $\tilde{b} = b - Ax_{eq}$. Let $\omega^*$ be the projection of the current iterate $\omega_k$ onto $\Omega$. We define the error vector as $e_k = \omega_k - \omega^*$.

Consider the $k$-th iteration. Let $g_j(\omega_k) = \frac{[\tilde{a}_j^\top \omega_k - \tilde{b}_j]_+^2}{\|\tilde{a}_j\|^2}=[\tilde{a}_j^\top \omega_k - \tilde{b}_j]_+^2$ denote the squared normalized violation of the row-normalized $j$-th constraint. The algorithm selects the active constraint index $i_k$ from a sampled batch $\mathcal{S}_k$ using the greedy criterion:
\begin{equation}
i_k = \mathop{\arg\max}_{j \in \mathcal{S}_k} g_j(\omega_k).
\end{equation}

The standard Kaczmarz update with step size $\alpha \in (0, 2)$ yields the following error contraction recurrence \cite{de2017sampling}:
\begin{equation}
    \|e_{k+1}\|^2 \le \|e_k\|^2 - \alpha(2-\alpha) g_{i_k}(\omega_k).
\end{equation}

The hybrid sampling strategy ensures that the batch $\mathcal{S}_k$ contains a subset $\tau_\text{unif}$ of size $\beta_\text{unif} \ge 1$, which is sampled uniformly from all $m$ constraints, which is guaranteed by $\rho < 1$. The maximum violation in the full batch is bounded below by the maximum violation in the uniform subset, which is further bounded below by the average violation in that subset:
\begin{equation}
    g_{i_k}(\omega_k) = \max_{j \in \mathcal{S}_k} g_j(\omega_k) \ge \max_{j \in \tau_\text{unif}} g_j(\omega_k) \ge \frac{1}{\beta_\text{unif}} \sum_{j \in \tau_\text{unif}} g_j(\omega_k).
\end{equation}

We take the expectation with respect to the random sampling of the batch. Since $\tau_\text{unif}$ is drawn uniformly from $\{1, \dots, m\}$, the probability that any specific constraint $l$ is included in $\tau_\text{unif}$ is $\beta_\text{unif}/m$. By the linearity of expectation:

\begin{equation}
    \begin{aligned}
\mathbb{E}[g_{i_k}(\omega_k)] &\ge \mathbb{E}\left[ \frac{1}{\beta_\text{unif}} \sum_{j \in \tau_\text{unif}} g_j(\omega_k) \right] \\
&= \frac{1}{\beta_\text{unif}} \sum_{l=1}^m \frac{\beta_\text{unif}}{m} g_l(\omega_k) \\
&= \frac{1}{m} \sum_{l=1}^m g_l(\omega_k).
\end{aligned}
\end{equation}

The summation term corresponds to the squared norm of the residual vector $\|[\tilde{A}\omega_k - \tilde{b}]_+\|^2$. By the global Hoffman error bound theorem, there exists a constant $\mathcal{H}(\tilde{A})$ such that:

\begin{equation}
    \sum_{l=1}^m g_l(\omega_k) = \|[\tilde{A}\omega_k - \tilde{b}]_+\|^2 \ge \frac{1}{\mathcal{H}^2(\tilde{A})} \|e_k\|^2.
\end{equation}

Substituting this back into the expectation of the error recurrence:

\begin{equation}
    \mathbb{E}[\|e_{k+1}\|^2] \le \left( 1 - \frac{\alpha(2-\alpha)}{m \cdot \mathcal{H}^2(\tilde{A})} \right) \|e_k\|^2=\left( 1 - \frac{1}{m \cdot \mathcal{H}^2(\tilde{A})} \right) \|e_k\|^2.
\end{equation}

Under $\alpha = 1 \in (0,2) $ and $\mathcal{H}(\tilde{A}) < \infty$, the algorithm converges linearly in expectation. 

\end{proof}

\efficiencygain*

\begin{proof}
Let $\mathbf{e}_k = \mathbf{x}_k - \mathbf{x}^*$. Given row normalization, the Kaczmarz update with hinge loss implies $\|\mathbf{e}_{k+1}\|^2 = \|\mathbf{e}_k\|^2 - [ \mathbf{a}_i^\top \mathbf{x}_k - b_i ]_+^2$.

Taking the expectation over the sampled index $i$:
\begin{equation}
    \mathbb{E}[\|\mathbf{e}_{k+1}\|^2] = \|\mathbf{e}_k\|^2 - \sum_{i \in \mathcal{S}} p_i [ \mathbf{a}_i^\top \mathbf{x}_k - b_i ]_+^2.
\end{equation}

For \textbf{Standard SKM}, $\mathcal{S}=\{1, \dots, m\}$ and $p_i = 1/m$. Using the global Hoffman bound on the residual vector $[A\mathbf{x}_k - \mathbf{b}]_+$, we have $\frac{1}{m}\|[A\mathbf{x}_k - \mathbf{b}]_+\|^2 \ge \frac{1}{m \cdot \mathcal{H}^2(A)} \|\mathbf{e}_k\|^2$ \cite{levenStandardSKM}.

For \textbf{Accelerated SKM}, $\mathcal{S}=\hat{\mathcal{A}}$ and $p_i = 1/|\hat{\mathcal{A}}|$. 
Crucially, under Assumption~\ref{ass:local_stability}, for any inactive constraint $i \notin \mathcal{A}^*$, we have $\mathbf{a}_i^\top \mathbf{x}_k < b_i$, implying $[ \mathbf{a}_i^\top \mathbf{x}_k - b_i ]_+ = 0$. 
Thus, the summation vanishes for all inactive constraints and retains only the contribution from the true active set $\mathcal{A}^*$. For $i \in \mathcal{A}^*$, since $\mathbf{a}_i^\top \mathbf{x}^* = b_i$, the term simplifies to $[ \mathbf{a}_i^\top \mathbf{x}_k - b_i ]_+^2 = \|\mathbf{a}_i^\top \mathbf{e}_k\|^2$:
\begin{equation}
\begin{aligned}
    \sum_{i \in \hat{\mathcal{A}}} \frac{1}{|\hat{\mathcal{A}}|} [ \mathbf{a}_i^\top \mathbf{x}_k - b_i ]_+^2 &= \frac{1}{|\hat{\mathcal{A}}|} \sum_{i \in \mathcal{A}^*} \|\mathbf{a}_i^\top \mathbf{e}_k\|^2 \\
    &= \frac{1}{|\hat{\mathcal{A}}|} \|A_{\mathcal{A}^*}\mathbf{e}_k\|^2.
\end{aligned}
\end{equation}
Finally we can apply the spectral bound $\|A_{\mathcal{A}^*}\mathbf{e}_k\|^2 \ge \sigma_{\min}^2(A_{\mathcal{A}^*}) \|\mathbf{e}_k\|^2$ to yield the accelerated rate.
\end{proof}

\section{Computational Complexity of Proposed Cholesky Update Method} \subsection{Implementation Details of the Cholesky Update}
\label{app:cholesky_imple}

The key of our proposed Cholesky update framework is to utilize the properties of low-rank perturbations, allowing us to reuse updated Cholesky factors. This avoids the high computational cost of Singular Value Decomposition while efficiently maintaining both the projection operator and the null-space basis.

Let $\mathbf{G} = \mathbf{C}\mathbf{C}^\top$ be the original Gram matrix. The perturbed Gram matrix $\tilde{\mathbf{G}}$ is derived as:
\begin{equation}\begin{aligned}\tilde{\mathbf{G}} &= (\mathbf{C} + \mathbf{u}\mathbf{v}^\top)(\mathbf{C} + \mathbf{u}\mathbf{v}^\top)^\top \\
&= \mathbf{C}\mathbf{C}^\top + \mathbf{C}\mathbf{v}\mathbf{u}^\top + \mathbf{u}\mathbf{v}^\top\mathbf{C}^\top + \mathbf{u}(\mathbf{v}^\top\mathbf{v})\mathbf{u}^\top \\
&= \mathbf{G} + (\mathbf{C}\mathbf{v})\mathbf{u}^\top + \mathbf{u}(\mathbf{C}\mathbf{v})^\top + \|\mathbf{v}\|^2 \mathbf{u}\mathbf{u}^\top\end{aligned}
\end{equation}

Let $\mathbf{h} = \mathbf{C}\mathbf{v}$ and scalar $\lambda = \|\mathbf{v}\|^2$, we rewrite the expression as:
\begin{equation}\tilde{\mathbf{G}} = \mathbf{G} + \mathbf{h}\mathbf{u}^\top + \mathbf{u}\mathbf{h}^\top + \lambda \mathbf{u}\mathbf{u}^\top
\end{equation}

We define an auxiliary vector $\mathbf{m} = \mathbf{h} + \frac{\lambda}{2}\mathbf{u}$, then we have:
\begin{equation}
\mathbf{h}\mathbf{u}^\top + \mathbf{u}\mathbf{h}^\top + \lambda \mathbf{u}\mathbf{u}^\top = \mathbf{u}\mathbf{m}^\top + \mathbf{m}\mathbf{u}^\top = \frac{1}{2}(\mathbf{u}+\mathbf{m})(\mathbf{u}+\mathbf{m})^\top - \frac{1}{2}(\mathbf{u}-\mathbf{m})(\mathbf{u}-\mathbf{m})^\top
\end{equation}

Substituting this back into the Gram matrix expression yields: 
\begin{equation} \tilde{\mathbf{G}} = \mathbf{G} + \underbrace{\frac{1}{2}(\mathbf{u}+\mathbf{m})(\mathbf{u}+\mathbf{m})^\top}_{\text{Update}} - \underbrace{\frac{1}{2}(\mathbf{u}-\mathbf{m})(\mathbf{u}-\mathbf{m})^\top}_{\text{Downdate}} 
\end{equation}

Consequently, we do not need to explicitly reconstruct $\tilde{\mathbf{G}}$. Using standard \texttt{chol\_update} and \texttt{chol\_downdate} routines, we apply one rank-1 updates and one rank-1 downdate to the original factor $\mathbf{L}$, directly obtaining $\tilde{\mathbf{L}}$ such that $\tilde{\mathbf{L}}\tilde{\mathbf{L}}^\top = \tilde{\mathbf{G}}$.

With the updated factor $\tilde{\mathbf{L}}$, we define the constraint residual $\mathbf{r} = \tilde{\mathbf{C}}\mathbf{x}_0 - \mathbf{d}$. The projection formula is given by:
\begin{equation}
\mathbf{x}_{\text{eq}} = \mathbf{x}_0 - \tilde{\mathbf{C}}^\top ((\tilde{\mathbf{L}}^\top)^{-1} (\tilde{\mathbf{L}}^{-1} \mathbf{r}))
\end{equation}
Using $\tilde{\mathbf{L}}$, we solve this via two triangular substitutions:
\begin{enumerate}
\item \textbf{Forward Substitution:} $\tilde{\mathbf{L}} \mathbf{k} = \mathbf{r}$;
\item \textbf{Backward Substitution:} $\tilde{\mathbf{L}}^\top \mathbf{y} = \mathbf{k}$.
\end{enumerate}
The final projected solution is obtained by:
\begin{equation}
\tilde{\mathbf{x}}_\text{eq} = \mathbf{x}_0 - \tilde{\mathbf{C}}^\top \mathbf{y}
\end{equation}

Next, we construct the new null-space basis $\tilde{\mathbf{Z}}$ such that $\tilde{\mathbf{C}}\tilde{\mathbf{Z}} = \mathbf{0}$, employing a rotation-correction strategy. First, we compute the projection of the perturbation onto the original null space, $\mathbf{w} = \mathbf{Z}^\top \mathbf{v}$. To concentrate the perturbation, we construct a Householder reflection $\mathbf{H} = \mathbf{I} - 2\frac{\mathbf{q}\mathbf{q}^\top}{\|\mathbf{q}\|^2}$ with $\mathbf{q} = \mathbf{w} - \|\mathbf{w}\|\mathbf{e}_1$. Applying this to the original basis yields the transition basis \cite{householder1958unitary}:
\begin{equation}
\mathbf{Z}' = \mathbf{Z}\mathbf{H} = \left[ \mathbf{z}'_1, \mathbf{z}'_2, \dots, \mathbf{z}'_{n-m} \right]
\end{equation}

By properties of the Householder transformation, columns $2$ through $n-m$ of $\mathbf{Z}'$ satisfy $\mathbf{v}^\top \mathbf{z}'_j = 0$. Thus, for $j \ge 2$:
\begin{equation}
\tilde{\mathbf{C}} \mathbf{z}'_j = (\mathbf{C} + \mathbf{u}\mathbf{v}^\top)\mathbf{z}'_j = \mathbf{C}\mathbf{z}'_j + \mathbf{u}(\mathbf{v}^\top \mathbf{z}'_j) = \mathbf{0}
\end{equation}

This indicates that $\mathbf{Z}'_{:, 2:\text{end}}$ directly forms the majority of the new null space. The first column $\mathbf{z}'_1$ produces a non-zero residual $\tilde{\mathbf{C}}\mathbf{z}'_1 = \mathbf{u}(\mathbf{v}^\top \mathbf{z}'_1)$. To eliminate this, we seek a vector $\mathbf{p}$ satisfying $\tilde{\mathbf{C}}\mathbf{p} = \mathbf{u}$. Using the maintained $\tilde{\mathbf{L}}$, we solve $\tilde{\mathbf{C}}\tilde{\mathbf{C}}^\top \mathbf{y} = \mathbf{u}$ via triangular substitution and set $\mathbf{p} = \tilde{\mathbf{C}}^\top \mathbf{y}$.
Finally, adding the correction term to $\mathbf{z}'_1$ yields the complete new basis:
\begin{equation}
\tilde{\mathbf{Z}} = \left[ \mathbf{z}'_1 - \mathbf{v}^\top \mathbf{z}'_1 \mathbf{p}, \quad \mathbf{Z}'_ {:, 2:\text{end}} \right]
\end{equation}

\subsection{Complexity Analysis}
\label{app:choleskycomplexity}

Let $\mathbf{C}\in\mathbb{R}^{n_{\mathrm{eq}}\times n}$ denote the row full-rank equality-constraint matrix, and let $r=n-n_{\mathrm{eq}}$ be the dimension of its null space.  The SVD approach requires a complete factorization of the constraint matrix, incurring a time complexity of $\mathcal{O}(n^3)$. In contrast, the proposed framework exploits the low-rank nature of topological perturbations, effectively decoupling the high-dimensional problem into sequential vector-level operations. The computational cost of one complete iteration is decomposed as follows:
\begin{itemize}
    \item \textbf{Factor Maintenance:} We perform one rank-1 update and one rank-1 downdate on the Cholesky factor $\mathbf{L}$ of $\mathbf{G}=\mathbf{C}\mathbf{C}^\top \in \mathbb{R}^{n_{\mathrm{eq}}\times n_{\mathrm{eq}}}$. This step operates purely on the triangular factor, reducing the complexity to $\mathcal{O}(n_{\mathrm{eq}}^2)$, where $n_{\mathrm{eq}}$ is the number of equality constraints.
    \item \textbf{Equality Projection:} Computing the projected solution involves matrix-vector multiplications and triangular substitutions using the updated $\tilde{\mathbf{L}}$. The dominant cost is the matrix-vector product, scaling as $\mathcal{O}(n_{\mathrm{eq}} \cdot n)$.
    \item \textbf{Null Space Update:} The subspace rotation involves Householder transformations on the basis matrix $\mathbf{Z} \in \mathbb{R}^{n \times r}$, costing $\mathcal{O}(n \cdot r)$, and the correction requires solving for vector $\mathbf{p}$, adding a $\mathcal{O}(n_{\mathrm{eq}}\cdot n)$ cost regardless of the null-space dimension.
\end{itemize}

Since both $n_{\mathrm{eq}}$ and $r$ are bounded by $n$, the overall asymptotic complexity of our framework is $\mathcal{O}(n^2)$. This represents a theoretical speedup of order $n$ compared to standard SVD-based methods, transforming the expensive topological adaptation into a computationally lightweight process which is suitable for real-time applications.

\subsection{Detailed Analysis for Results on RGGs}
\label{app:rgg_analysis}

To validate the theoretical complexity analysis, we performed tests on Random Geometric Graphs of varying scales. Table \ref{tab:rgg_complexity_comparison} presents the overall performance comparison.

\begin{table}[htbp]
  \caption{Computation time comparison between SVD and Cholesky Update on random geometric graphs. The reported complexity values are empirical scaling exponents obtained by least-squares linear regression in log–log space.}
  \label{tab:rgg_complexity_comparison}
  \centering
  \begin{scriptsize}
  \begin{sc}
  \begin{tabular}{ccccc|ccccc}
    \toprule
    \multicolumn{5}{c|}{\textbf{Small-Scale ($N=100 \sim 1000$)}} & \multicolumn{5}{c}{\textbf{Large-Scale ($N=1500 \sim 5000$)}} \\
    \cmidrule(r){1-5} \cmidrule(l){6-10}
    $N$ & $K$ & SVD (ms) & Chol. (ms) & Speedup & $N$ & $K$ & SVD (ms) & Chol. (ms) & Speedup \\
    \midrule
    100  & 10 & 0.874     & 0.119   & 7.36$\times$   & 1500 & 38 & 307.441    & 8.187    & 37.55$\times$ \\
    200  & 14 & 2.906     & 0.290   & 10.02$\times$  & 2000 & 44 & 995.480    & 14.754   & 67.47$\times$ \\
    400  & 20 & 11.106    & 0.575   & 19.33$\times$  & 2500 & 50 & 2276.658   & 23.450   & 97.09$\times$ \\
    800  & 28 & 49.833    & 2.257   & 22.08$\times$  & 3000 & 54 & 4595.625   & 40.505   & 113.46$\times$ \\
    1000 & 31 & 83.666    & 3.583   & 23.35$\times$  & 5000 & 70 & 20596.582  & 122.586  & 168.02$\times$ \\
    \midrule
    \textbf{Complex.} & - & $\mathcal{O}(n^{1.99})$ & $\mathcal{O}(n^{1.46})$ & - &
    \textbf{Complex.} & - & $\mathcal{O}(n^{3.49})$ & $\mathcal{O}(n^{2.28})$ & - \\
    \bottomrule
  \end{tabular}
  \end{sc}
  \end{scriptsize}
\end{table}

Table \ref{tab:rgg_complexity_comparison} highlights the significant scalability gap between the SVD approach and our proposed Cholesky Update method. In the small-scale group from 100 to 1000, we observe empirical complexities of $\mathcal{O}(n^{1.99})$ for SVD and $\mathcal{O}(n^{1.46})$ for the Cholesky Update. These growth rates are much lower than their theoretical asymptotic bounds, which can be attributed to specific optimizations in underlying linear algebra libraries that maximize cache efficiency and instruction throughput for smaller matrices. However, even with these library-level enhancements benefiting the baseline, our method maintains superior performance, delivering substantial speedups ranging from 7.36$\times$ to 23.35$\times$.

As the number of nodes increases from 1500 to 5000, the SVD computation time exhibits a super-cubic growth trend with a fitted exponent of 3.49 in the large-scale group, reflecting the heavy burden of computation and memory data movement. In contrast, the proposed Cholesky Update maintains a near-quadratic growth rate of approximately 2.28, which closely matches the theoretical lower bound. A remaining limitation is that large-scale matrix operations can become memory-bound, as the increasing matrix size and data movement overhead can partially offset the arithmetic savings and push the observed scaling above the ideal quadratic rate. Nevertheless, the method still maintains a clear efficiency advantage over SVD, achieving over 168 times acceleration at 5000 nodes.

\subsection{Numerical Stability under Higher-Rank Contingencies}
\label{app:cholesky_stability}

Sequential Cholesky updates may accumulate numerical errors under finite precision, especially when multiple topology changes are applied consecutively. This issue becomes more visible for higher-rank contingencies and larger systems, where the update and downdate vectors can contain entries with very different magnitudes. To address this numerical concern, we use an adaptive rescaling strategy that preserves the exact low-rank perturbation while improving the conditioning of intermediate update vectors.

For a rank-one perturbation $\Delta \mathbf{C}=\mathbf{u}\mathbf{v}^\top$, the representation is not unique. For any scalar $\alpha>0$, we have
\begin{equation}
\Delta \mathbf{C}
=
\mathbf{u}\mathbf{v}^\top
=
(\alpha \mathbf{u})(\mathbf{v}/\alpha)^\top .
\end{equation}
In the Gram-matrix update derivation in Appendix~\ref{app:cholesky_imple}, we define
$\mathbf{h}=\mathbf{C}\mathbf{v}$,
$\lambda=\|\mathbf{v}\|_2^2$, and
$\mathbf{m}=\mathbf{h}+\frac{\lambda}{2}\mathbf{u}$.
After applying the above rescaling, the corresponding pair becomes $\alpha \mathbf{u}$ and $\mathbf{m}/\alpha$. Therefore, the exact perturbation remains unchanged, but the numerical magnitudes of the two vectors entering the Cholesky update and downdate can be balanced. In practice, we choose
\begin{equation}
\alpha=\sqrt{\frac{\|\mathbf{m}\|_2}{\|\mathbf{u}\|_2}},
\end{equation}
which equalizes the norms of $\alpha\mathbf{u}$ and $\mathbf{m}/\alpha$. This adaptive rescaling reduces extreme intermediate values, for example, on the IEEE 300-bus system, the largest intermediate magnitude is reduced from roughly $10^{10}$ to roughly $10^5$.

We evaluate both the original and adaptive Cholesky updates under fp32 and fp64 precision. The experiments cover all $N$-1 and $N$-2 contingencies, together with a subset of $N$-3 contingencies, on the IEEE 57-, 118-, and 300-bus systems. A scenario is counted as a failure if the sequential Cholesky update cannot be completed in the corresponding floating-point precision.

\begin{table}[h]
\centering
\scriptsize
\setlength{\tabcolsep}{3.5pt}
\caption{Failure counts of original and adaptive Cholesky updates under fp32 and fp64 precision. The percentage in parentheses denotes the failure rate among evaluated contingency scenarios.}
\label{tab:cholesky_stability}
\begin{sc}
\begin{tabular}{lcccccc}
\toprule
\multirow{2}{*}{System} &
\multirow{2}{*}{Contingency} &
\multirow{2}{*}{\# Scenarios} &
\multicolumn{2}{c}{FP32 Failures} &
\multicolumn{2}{c}{FP64 Failures} \\
\cmidrule(lr){4-5} \cmidrule(lr){6-7}
& & & Original & Adaptive & Original & Adaptive \\
\midrule
57-bus  & $N$-1 & 79     & 68 (86.1\%)     & 0 (0.0\%)   & 0 (0.0\%)   & 0 (0.0\%) \\
57-bus  & $N$-2 & 3024   & 2973 (98.3\%)   & 0 (0.0\%)   & 0 (0.0\%)   & 0 (0.0\%) \\
57-bus  & $N$-3 & 6048   & -              & 0 (0.0\%)   & -           & 0 (0.0\%) \\
\midrule
118-bus & $N$-1 & 177    & 174 (98.3\%)    & 0 (0.0\%)   & 0 (0.0\%)   & 0 (0.0\%) \\
118-bus & $N$-2 & 15502  & 15501 (99.9\%)  & 0 (0.0\%)   & 0 (0.0\%)   & 0 (0.0\%) \\
118-bus & $N$-3 & 31004  & -              & 0 (0.0\%)   & -           & 0 (0.0\%) \\
\midrule
300-bus & $N$-1 & 322    & 303 (94.1\%)    & 1 (0.3\%)   & 2 (0.6\%)   & 0 (0.0\%) \\
300-bus & $N$-2 & 51559  & 51405 (99.7\%)  & 279 (0.5\%) & 641 (1.2\%) & 0 (0.0\%) \\
300-bus & $N$-3 & 103118 & -              & 558 (0.5\%) & -           & 0 (0.0\%) \\
\bottomrule
\end{tabular}
\end{sc}
\end{table}

As shown in Table~\ref{tab:cholesky_stability}, the original sequential update is highly unstable in fp32 and becomes almost unusable for $N$-1 and $N$-2 contingencies. Although fp64 substantially improves stability, the original update still produces a small number of failures on the IEEE 300-bus system. In contrast, the adaptive rescaling removes all observed fp64 failures across all evaluated systems and contingency levels. Under fp32, it also eliminates all failures on the 57- and 118-bus systems and leaves only a small failure rate on the larger 300-bus system. These results indicate that the numerical stability issue of sequential Cholesky updates can be effectively controlled by adaptive rescaling, while the exact low-rank perturbation and the theoretical update structure remain unchanged.

\section{Minimum-Cost Gas Transport Problem}
\label{appendix:min_cost}

We further evaluate the proposed method on a minimum-cost gas transport problem using the GasLib-135 benchmark \cite{gaslib}. Given a gas network $G=(V,E)$, each node $v\in V$ represents a supply or demand point, and each edge $e\in E$ is associated with capacity limits and transportation costs. The goal is to find a minimum-cost flow allocation while satisfying nodal flow-balance constraints and edge-capacity constraints. Each test instance corresponds to an operating scenario with different demand levels and possible topology changes, such as edge removals or failure cases. We evaluate both seen and unseen scenarios to assess generalization across network conditions. The optimization problem is formulated as follows:

\begin{equation}
\label{eq:static_flow_exp}
\begin{aligned}
\min_{\mathbf{g}^{(k)},\,\mathbf{f}^{(k)}} \quad
& \sum_{s \in \mathcal{S}} \alpha_s g_s^{(k)}
  + \sum_{\ell \in \mathcal{L}^{(k)}} \rho_\ell \left| f_\ell^{(k)} \right| \\
\text{s.t.}\quad
& \sum_{s \in \mathcal{S}(v)} g_s^{(k)}
  + \sum_{\ell \in \mathcal{L}^{(k)}} A_{v\ell}^{(k)} f_\ell^{(k)}
  = d_v,
  \quad \forall v \in \mathcal{V}\setminus\{v_{\mathrm{ref}}\}, \\
& \sum_{s \in \mathcal{S}} g_s^{(k)}
  = \sum_{r \in \mathcal{R}} d_r, \\
& 0 \leq g_s^{(k)} \leq \overline{g}_s,
  \quad \forall s \in \mathcal{S}, \\
& -\overline{f}_\ell \leq f_\ell^{(k)} \leq \overline{f}_\ell,
  \quad \forall \ell \in \mathcal{L}^{(k)}, \\
& k \in \{0\} \cup \mathcal{K}.
\end{aligned}
\end{equation}
where 
\[
\mathcal{L}^{(0)} = \mathcal{L}, 
\qquad
\mathcal{L}^{(k)} = \mathcal{L} \setminus \{k\}, \quad k \in \mathcal{K}.
\]

\begin{table}[h]
\centering
\scriptsize
\caption{HGNN-only prediction quality on the minimum-cost gas transport problem. Metrics except forward time are formatted as ``Seen / Unseen''.}
\label{tab:gas_hgnn_quality}
\begin{sc}
\begin{tabular}{lcccc}
\toprule
Method & Forward Time (ms) & Max Eq. Viol. & Max Ineq. Viol. & Opt. Gap (\%) \\
\midrule
HGNN & $3.810 \pm 2.074$ & $473.6 / 2337.0$ & $100.5 / 416.8$ & $0.008 / 0.009$ \\
\bottomrule
\end{tabular}
\end{sc}
\end{table}

\begin{table}[h]
\centering
\scriptsize
\caption{SKM-layer performance on the minimum-cost gas transport problem. The optimality gap is reported as mean value in the format ``Seen / Unseen''.}
\label{tab:gas_skm_performance}
\begin{sc}
\begin{tabular}{lcccccccc}
\toprule
Method & Eq. Proj. & Eq. Spd. & Ineq. Iter. & Ineq. Time & Ineq. Spd. & SKM Time & SKM Spd. & Opt. Gap (\%) \\
\midrule
T-SKM  & $2.19 \pm 1.06$ & $1.00\times$ & $15.21\,(128)$ & $0.96 \pm 1.22$ & $1.00\times$ & $3.15$ & $1.00\times$ & $0.006 / 0.019$ \\
AT-SKM & $0.53 \pm 0.35$ & $4.13\times$ & $5.33\,(78)$   & $0.37 \pm 0.71$ & $2.59\times$ & $0.90$ & $3.50\times$ & $0.006 / 0.019$ \\
\bottomrule
\end{tabular}
\end{sc}
\end{table}

Table~\ref{tab:gas_hgnn_quality} shows that the HGNN-only predictor can obtain small mean optimality gaps, but it suffers from substantial worst-case equality and inequality constraint violations, especially on unseen scenarios. This indicates that direct neural prediction alone is insufficient for safety-critical constrained optimization.

In contrast, Table~\ref{tab:gas_skm_performance} shows that both T-SKM-Net and AT-SKM-Net achieve zero equality and inequality constraint violations on both seen and unseen scenarios. Compared with T-SKM-Net, AT-SKM-Net reduces the equality projection time from $2.19$ ms to $0.53$ ms, yielding a $4.13\times$ speedup, and reduces the inequality iteration time from $0.96$ ms to $0.37$ ms, yielding a $2.59\times$ speedup. Overall, AT-SKM-Net reduces the total SKM computation time from $3.15$ ms to $0.90$ ms, corresponding to a $3.50\times$ acceleration while preserving strict feasibility and maintaining comparable mean optimality gaps.

\section{Experimental Settings and Additional Results for N-1 Security-Constrained DC-OPF}
\label{appendix:experiments}

\subsection{Mathematical Formulation of N-1 Security-Constrained DC-OPF}
\label{appendix:scopf}

The N-1 Security-Constrained DC Optimal Power Flow problem aims to determine the optimal generation dispatch that minimizes total operating costs while ensuring system feasibility under both normal conditions and any single transmission line outage. Let $\mathcal{K}$ denote the set of all considered scenarios, where $k=0$ represents the base case and each remaining $k \in \mathcal{K}$ corresponds to a specific line failure contingency. Under a given scenario $k$, the network topology changes accordingly; we define $\mathcal{L}^{(k)}$ as the set of remaining active lines and $\mathcal{N}^{(k)}(i)$ as the set of neighboring buses for bus $i$ in that topology. The optimization problem is formulated as follows:

\begin{equation}
\label{eq:scopf_full}
\begin{aligned}
\min_{\mathbf{P}_\mathcal{G}, \boldsymbol{\delta}} \quad & \sum_{i \in \mathcal{G}} \left(\frac{1}{2} c_i P_{G,i}^2 + b_i P_{G,i}\right) \\
\text{s.t.}\quad & P_{G,i} - P_{D,i} = \sum_{j \in \mathcal{N}^{(k)}(i)} B_{ij} (\delta_i^{(k)} - \delta_j^{(k)}), \quad \forall i \in \mathcal{B}, \forall k \in \{0\} \cup \mathcal{K} \\
& P_{G,i}^{\min} \leq P_{G,i} \leq P_{G,i}^{\max}, \quad \forall i \in \mathcal{G} \\
& -P_{ij}^{\max} \leq B_{ij}(\delta_i^{(k)} - \delta_j^{(k)}) \leq P_{ij}^{\max}, \quad \forall (i, j) \in \mathcal{L}^{(k)}, \forall k \in \{0\} \cup \mathcal{K}
\end{aligned}
\end{equation}
where the sets $\mathcal{B}$, $\mathcal{G} \subseteq \mathcal{B}$, and $\mathcal{L}$ represent buses, generators, and transmission lines, respectively. The decision variables consist of the active power outputs $\mathbf{P}_\mathcal{G} = \{P_{G,i}\}_{i \in \mathcal{G}}$ and the voltage phase angles $\boldsymbol{\delta}^{(k)} = \{\delta_i^{(k)}\}_{i \in \mathcal{B}}$ for each scenario. It is important to note that the generation dispatch $P_G$ is preventive and shared across all scenarios, whereas the system state $\delta^{(k)}$ adapts to each specific topology. In the objective function, $c_i$ and $b_i$ denote the quadratic and linear cost coefficients for generator $i$. The parameter $P_{D,i}$ represents the active power demand at bus $i$, and $B_{ij}$ is the line susceptance defined as a positive value $1/X_{ij}$. For non-generator buses $i \in \mathcal{B} \setminus \mathcal{G}$, we set $P_{G,i} = 0$. The terms $P_{G,i}^{\min/\max}$ and $P_{ij}^{\max}$ define the generator capacity limits and transmission line thermal limits, respectively.

\subsection{N-1 Security-Constrainted DC-OPF Dataset Generation}

To validate the effectiveness of the proposed method, we employ the IEEE 57, 118, and 300-bus systems as experimental benchmarks. In constructing the N-1 contingency dataset, we first screen all potential single-line outage scenarios, retaining only those that are physically feasible. Here, a feasible scenario is defined as one where the post-contingency network topology remains connected without forming islands, and the OPF solver successfully converges to a feasible solution. \cref{tab:dataset_statistics} summarizes the statistics for each test system, including the number of buses, total branches, and the count of valid N-1 contingencies with the original scenario after screening.

For each selected topological scenario, we generate 100 independent operating instances by introducing random perturbations to the nodal loads. Specifically, the load vectors are obtained by uniformly scaling the baseline load profile:
\begin{equation}P_D^{(i)} = \alpha_i \cdot P_D^{\text{base}}, \quad \alpha_i \sim \mathcal{U}(0.8, 1.2),
\end{equation}
where $P_D^{\text{base}}$ represents the baseline load values, and $\alpha_i$ is a random scaling factor drawn from a uniform distribution over the interval $[0.8, 1.2]$. This generation procedure ensures that the dataset encompasses a wide range of load fluctuations.

Regarding dataset partitioning, to evaluate the model's generalization capability to unseen contingencies, we randomly divide all valid N-1 scenarios into two subsets: 70\% are designated as \textit{seen} scenarios for the training phase, while the remaining 30\% serve as \textit{unseen} scenarios. The training set consists exclusively of samples generated from the seen scenarios, whereas the test set comprehensively covers instances from both seen and unseen topologies.

\begin{table}[h]
\centering
\scriptsize
\caption{Settings of N-1 contingency Dataset and Hyperparameters of SKM Sampling}
\label{tab:dataset_statistics}
\begin{sc}
\begin{tabularx}{0.95\textwidth}{lXXXXXl}
\toprule
Test System & Buses & Branches & Removable & Scenarios & $\beta$ & $\rho$\\
\midrule
IEEE 57-Bus  & 57  & 80  & 79  & 80  & 10 & 0.8 \\
IEEE 118-Bus & 118 & 186 & 177 & 178 & 50 & 0.8 \\
IEEE 300-Bus & 300 & 411 & 322 & 323 & 60 & 0.8 \\
\bottomrule
\end{tabularx}
\end{sc}
\end{table}

\subsection{Experimental Hyperparameters and Loss Function Configuration}

In terms of model architecture and training configuration, we employ a Heterogeneous Graph Neural Network (HGNN) consisting of two message-passing layers, with the hidden dimension uniformly set to 128. We utilize the Adam optimizer for training, initialized with a learning rate of $5e{-4}$. A StepLR scheduler is applied to implement stepwise decay:
\begin{equation}
    \eta_t = \eta_0 \cdot \gamma^{\lfloor t / S \rfloor},
\end{equation}
where $\eta_0$ denotes the initial learning rate, $S = 25$ represents the step size, and $\gamma = 0.8$ is the decay factor. The batch size is set to 64 during the training phase. Conversely, the testing phase utilizes a batch size of 1 to simulate real-world scenarios requiring real-time inference for individual samples.

The composite loss function comprises two components: regression loss and classification loss. Following the methodology of T-SKM \cite{t-skm}, the regression loss adopts the Mean Squared Error (MSE) formulation, calculated as a weighted sum of the pre-projection and post-projection losses:
\begin{equation}
    L_{reg} = 0.9 \cdot \text{MSE}(\hat{\mathbf{y}}_{pre}, \mathbf{y}) + 0.1 \cdot \text{MSE}(\hat{\mathbf{y}}_{proj}, \mathbf{y}),
\end{equation}
where $\hat{\mathbf{y}}_{pre}$ represents the raw model output and $\hat{\mathbf{y}}_{proj}$ denotes the output after constraint projection. This combination enables the model to learn the primary feature mapping while simultaneously encouraging satisfaction of physical constraints.

For the active-set prediction task, we employ the Binary Focal Loss to address the issue of positive-negative sample imbalance \cite{lin2017focal}:
\begin{equation}
    L_\text{focal}(p_t) = -\alpha_t (1 - p_t)^{\gamma} \log(p_t),
\end{equation}
where $p_t$ is the model's estimated probability for the ground-truth class. The focusing parameter $\gamma = 2.0$ down-weights easy-to-classify examples, while the balancing parameter $\alpha = 0.5$ adjusts the relative importance of positive and negative samples. The total loss function is expressed as:
\begin{equation}
    L_\text{total} = L_\text{reg} + \lambda(t) \cdot L_\text{cls}.
\end{equation}
Here, $\lambda(t)$ follows a linear warmup schedule, increasing linearly from 0 to 1 over the first 40 training epochs. This strategy allows the model to prioritize the regression task during the initial training phase before gradually integrating the classification objective.

Active-set labels are generated using a relative slack criterion. For a two-sided inequality $l_i \le g_i(x) \le u_i$, the constraint is labeled active if its slack to either bound is less than $1\%$ of the feasible interval $u_i-l_i$. This threshold is used only for training the active-set predictor. In each iteration, the number of sampled constraints corresponds to approximately 10\% of the total constraint set. The specific configurations for the sampling batch size $\beta$ and the hybrid sampling ratio $\rho$ across different test systems are detailed in Table \ref{tab:dataset_statistics}.

\subsection{Hyperparameter Sensitivity of Hybrid Sampling Strategy}
\label{appendix:hybrid_sampling_sensitivity}
To analyze the sensitivity of our proposed active-set predict method to the batch size parameter, we conducted a controlled experiment using ground truth data. In this setup, we assume the availability of a perfect predictor by directly utilizing the actual active sets derived from the optimal solutions to guide the sampling process. We compared the convergence performance of the standard uniform sampling against the ground truth guided hybrid sampling across varying sampling ratios on IEEE 118 bus systems.

\begin{figure}[htbp]
    \centering
    \begin{subfigure}[b]{0.48\textwidth}
        \centering
        \includegraphics[width=\linewidth]{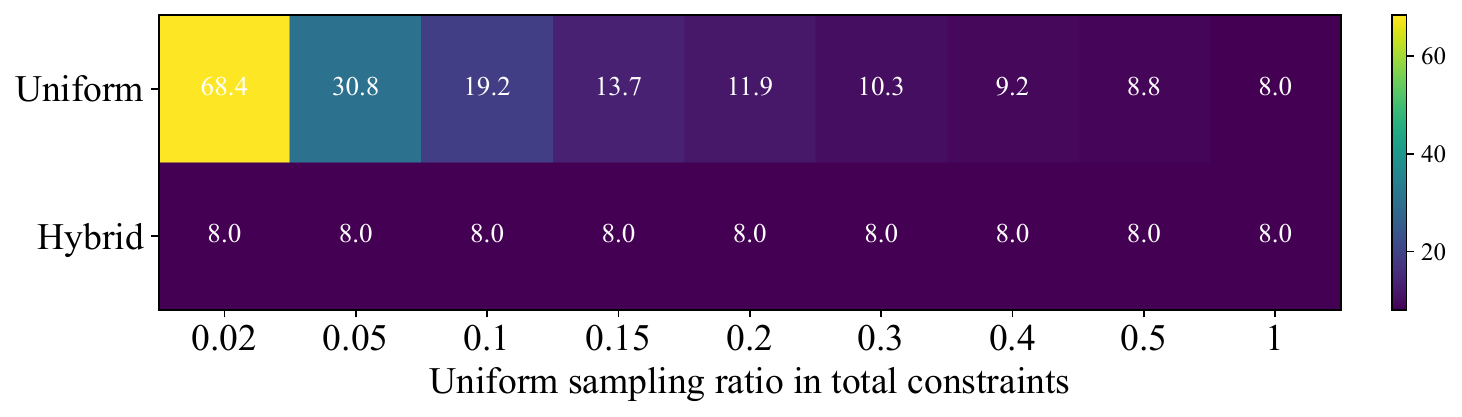}
        \caption{IEEE 118 Mean Iterations}
        \label{fig:mean_118}
    \end{subfigure}
    \hfill
    \begin{subfigure}[b]{0.48\textwidth}
        \centering
        \includegraphics[width=\linewidth]{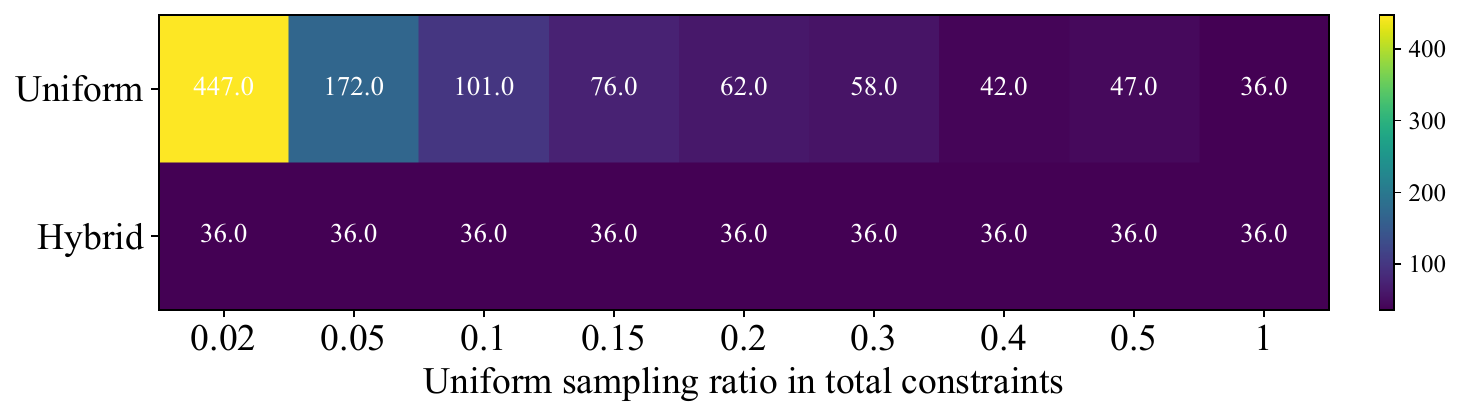}
        \caption{IEEE 118 Max Iterations}
        \label{fig:max_118}
    \end{subfigure}
    
    \begin{subfigure}[b]{0.48\textwidth}
        \centering
        \includegraphics[width=\linewidth]{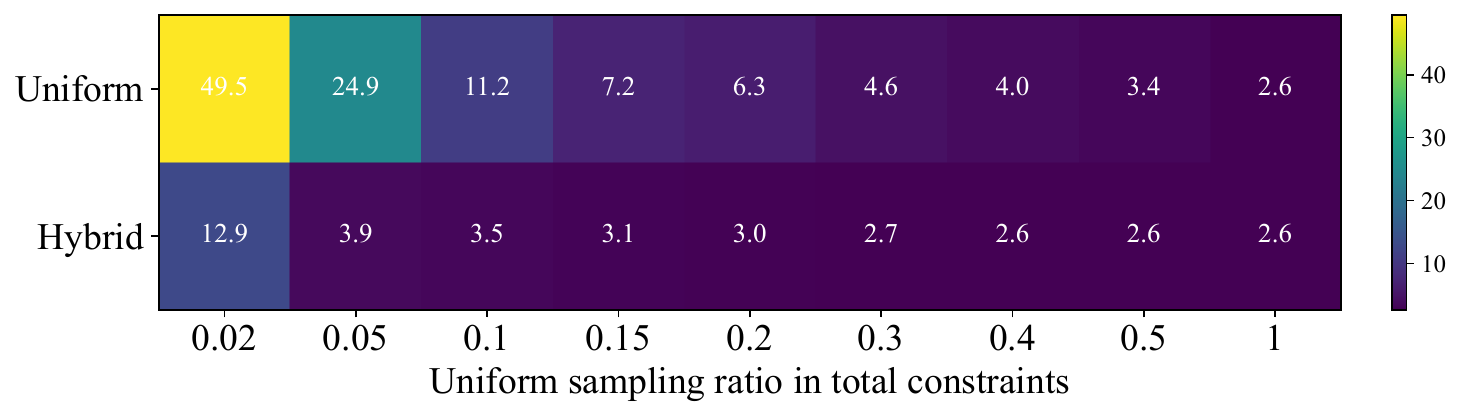}
        \caption{IEEE 300 Mean Iterations}
        \label{fig:mean_300}
    \end{subfigure}
    \hfill
    \begin{subfigure}[b]{0.48\textwidth}
        \centering
        \includegraphics[width=\linewidth]{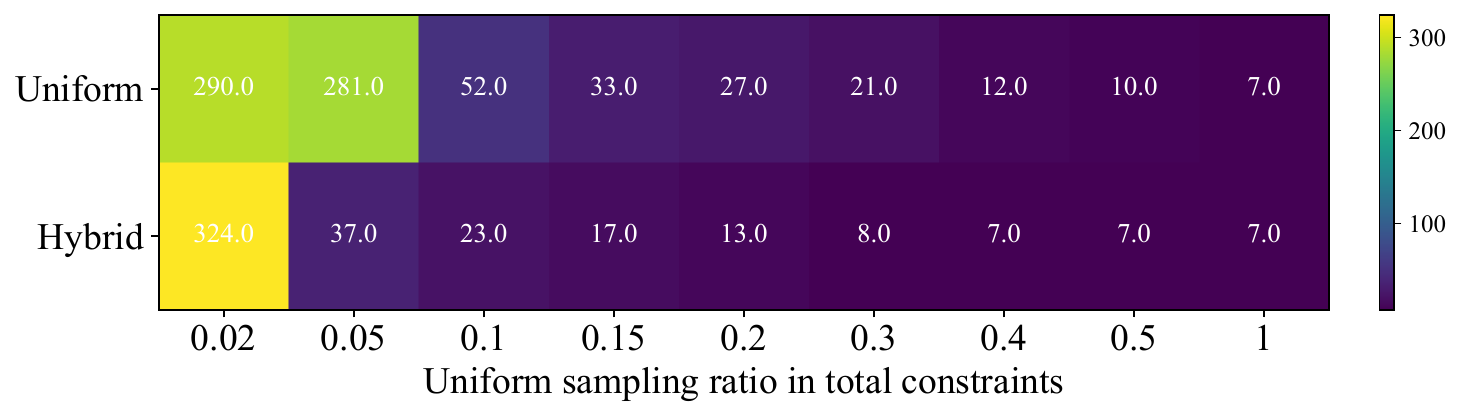}
        \caption{IEEE 300 Max Iterations}
        \label{fig:max_300}
    \end{subfigure}
    
    \caption{Sensitivity analysis of sampling strategies with a perfect predictor on IEEE 118-bus and 300-bus systems. Comparison between uniform sampling and ground truth guided hybrid sampling across varying sampling ratios.}
    \label{fig:sensitivity_analysis}
\end{figure}

Figure~\ref{fig:sensitivity_analysis} presents the sensitivity analysis where the SKM iterative process is initialized using the solution predicted by the HGNN. 

For the IEEE 118 bus system shown in Figure~\ref{fig:mean_118} and Figure~\ref{fig:max_118}, the results demonstrate that the hybrid sampling strategy exhibits remarkable stability when the active set is accurately identified. Regardless of the proportion of additional random sampling, both the mean and maximum iteration counts remain at a consistently low level. This implies that once the true active constraints are covered, the algorithm is effectively decoupled from the interference of massive inactive constraints. 

In the larger IEEE 300 bus system shown in Figure~\ref{fig:mean_300} and Figure~\ref{fig:max_300}, the hybrid strategy shows a significantly sharper improvement in convergence speed compared to uniform sampling. With only a minimal amount of random sampling included, the hybrid approach allows the iteration count to drop rapidly to its optimal level. Specifically, Figure 3d reveals that the hybrid strategy effectively suppresses worst case scenarios, achieving performance with about a tenth of the sampling effort that would otherwise require full set scanning in the uniform approach.

\subsection{Detailed Time Breakdown for AT-SKM-Net on SC DC-OPF Problems}
\label{appendix:time_breakdown_dcopf}

Using the same models and experimental settings as in the main SC DC-OPF experiments, we provide a detailed runtime breakdown of T-SKM-Net and AT-SKM-Net. Specifically, we separately measure the time spent in the equality projection stage and the inequality SKM iteration stage, which correspond to the two main computational components accelerated by the proposed Cholesky update and active-set-guided sampling strategy, respectively.

\begin{table}[h]
\centering
\scriptsize
\caption{Detailed runtime breakdown of T-SKM-Net and AT-SKM-Net on SC DC-OPF problems. AT-SKM-AC denotes the full AT-SKM-Net with both active-set-guided sampling and Cholesky update. Runtime values are reported in milliseconds.}
\label{tab:time_breakdown_dcopf}
\begin{tabular}{llccccc}
\toprule
\textsc{System} & \textsc{Method} & \textsc{Eq. Proj. Time}
& \textsc{Eq. Proj. Spd.} & \textsc{Ineq. Iter. Time}
& \textsc{Ineq. Iter. Spd.} & \textsc{Total SKM Spd.} \\
\midrule
57-bus  & T-SKM-Net  & $0.56 \pm 0.34$ & $1.00\times$  & $0.41 \pm 1.20$ & $1.00\times$ & $1.00\times$ \\
57-bus  & AT-SKM-AC & $0.18 \pm 0.02$ & $3.11\times$  & $0.06 \pm 0.07$ & $6.83\times$ & $4.04\times$ \\
\midrule
118-bus & T-SKM-Net  & $1.56 \pm 0.06$ & $1.00\times$  & $2.13 \pm 2.51$ & $1.00\times$ & $1.00\times$ \\
118-bus & AT-SKM-AC & $0.30 \pm 0.03$ & $5.20\times$  & $1.01 \pm 1.38$ & $2.11\times$ & $2.81\times$ \\
\midrule
300-bus & T-SKM-Net  & $7.65 \pm 1.54$ & $1.00\times$  & $2.07 \pm 4.59$ & $1.00\times$ & $1.00\times$ \\
300-bus & AT-SKM-AC & $0.70 \pm 0.02$ & $10.92\times$ & $0.32 \pm 0.20$ & $6.46\times$ & $9.53\times$ \\
\bottomrule
\end{tabular}
\end{table}

As shown in Table~\ref{tab:time_breakdown_dcopf}, AT-SKM-Net substantially reduces the equality projection time compared with T-SKM-Net. Moreover, the acceleration becomes more pronounced as the network size increases: the equality projection speedup grows from $3.11\times$ on the 57-bus system to $10.92\times$ on the 300-bus system. This trend is consistent with the theoretical motivation of the Cholesky update, whose advantage becomes more significant when the equality projection involves larger constraint matrices.

The inequality SKM iteration stage is also accelerated. Compared with T-SKM-Net, AT-SKM-Net reduces the inequality iteration time by $6.83\times$, $2.11\times$, and $6.46\times$ on the 57-, 118-, and 300-bus systems, respectively. These reductions reflect the benefit of active-set-guided sampling, which avoids spending excessive iterations on inactive or redundant constraints. Overall, AT-SKM-Net achieves total SKM speedups of $4.04\times$, $2.81\times$, and $9.53\times$ across the three systems, which is consistent with the main-paper conclusion that the proposed framework accelerates both equality projection and inequality iteration while preserving strict feasibility.

\subsection{Robustness under High-Load Operating Conditions}
\label{appendix:high_load_robustness}

To further evaluate the robustness of proposed AT-SKM-Net, we construct an additional high-load test regime for the SC DC-OPF problem. In the main experiments, load perturbations are sampled from the range $[0.8, 1.2]$ around the nominal demand profile. Here, we shift the load range to $[1.0, 1.2]$, thereby concentrating the test instances in more heavily loaded operating conditions. Such cases are more challenging because line-flow and generation-capacity constraints are more likely to approach their limits, making the active-set pattern more sensitive to both load variations and topology changes.

\begin{table}[h]
\centering
\scriptsize
\setlength{\tabcolsep}{2.2pt}
\caption{Performance of T-SKM-Net and AT-SKM-Net under high-load SC DC-OPF conditions. Load perturbations are sampled from $[1.0, 1.2]$. Runtime values are reported in milliseconds. The inequality iteration column reports the mean number of iterations, with the maximum shown in parentheses.}
\label{tab:high_load_robustness}
\begin{sc}
\begin{tabular}{llccccccc}
\toprule
System & Method &
Eq. Proj. Time &
Eq. Spd. &
Ineq. Iter. &
Ineq. Time &
Ineq. Spd. &
SKM Spd. &
Mean Gap (\%) \\
\midrule
57-bus  & T-SKM-Net  & $0.42 \pm 0.14$ & $1.00\times$ & $16.00\,(133)$ & $0.70 \pm 1.22$ & $1.00\times$ & $1.00\times$ & $0.014$ \\
57-bus  & AT-SKM-AC & $0.18 \pm 0.04$ & $2.33\times$ & $2.75\,(12)$   & $0.13 \pm 0.14$ & $5.38\times$ & $3.61\times$ & $0.010$ \\
\midrule
118-bus & T-SKM-Net  & $1.81 \pm 0.59$ & $1.00\times$ & $18.59\,(160)$ & $1.45 \pm 2.80$ & $1.00\times$ & $1.00\times$ & $0.004$ \\
118-bus & AT-SKM-AC & $0.37 \pm 0.22$ & $4.89\times$ & $9.21\,(56)$   & $0.82 \pm 1.56$ & $1.77\times$ & $2.74\times$ & $0.004$ \\
\midrule
300-bus & T-SKM-Net  & $7.05 \pm 0.51$ & $1.00\times$ & $25.49\,(248)$ & $1.78 \pm 3.02$ & $1.00\times$ & $1.00\times$ & $0.136$ \\
300-bus & AT-SKM-AC & $0.71 \pm 0.02$ & $9.93\times$ & $5.34\,(28)$   & $0.42 \pm 0.58$ & $4.24\times$ & $7.81\times$ & $0.136$ \\
\bottomrule
\end{tabular}
\end{sc}
\end{table}

As shown in Table~\ref{tab:high_load_robustness}, AT-SKM-Net continues to achieve clear acceleration over T-SKM-Net under the high-load regime. The equality projection stage is consistently accelerated, with the speedup increasing from $2.33\times$ on the 57-bus system to $9.93\times$ on the 300-bus system. This again confirms that the Cholesky update becomes increasingly beneficial as the system size grows.

The inequality iteration stage also remains robust under heavier loading. Compared with T-SKM-Net, AT-SKM-Net reduces the mean number of inequality iterations from $16.00$ to $2.75$ on the 57-bus system, from $18.59$ to $9.21$ on the 118-bus system, and from $25.49$ to $5.34$ on the 300-bus system. The maximum iteration count is also substantially reduced across all systems, indicating that active-set-guided sampling mitigates difficult tail cases even when more constraints are close to becoming active.

These results suggest that the HGNN predictor remains effective in more stressed operating conditions. The warm-start prediction continues to provide useful initialization, while the active-set predictor still concentrates sampling on the most relevant constraints. As a result, AT-SKM-Net achieves total SKM speedups of $3.61\times$, $2.74\times$, and $7.81\times$ on the 57-, 118-, and 300-bus systems, respectively, while maintaining comparable mean optimality gaps to T-SKM-Net.

\subsection{Active-Set Prediction Quality}
\label{appendix:active_set_quality}

\begin{table}[h]
\footnotesize
  \caption{Active set prediction performance on test ``seen / unseen'' scenarios.}
  \label{tab:active_set_metrics}
  \centering
  \begin{scriptsize}
  \begin{sc}
  \begin{tabular}{lccc}
    \toprule
    System & Accuracy & Precision & Recall \\
    \midrule
    IEEE 57-Bus  & 0.9995 / 0.9995 & 0.9814 / 0.9802 & 0.9804 / 0.9815 \\
    IEEE 118-Bus & 0.9995 / 0.9995 & 0.9988 / 0.9989 & 0.9948 / 0.9940 \\
    IEEE 300-Bus & 0.9991 / 0.9991 & 0.9647 / 0.9649 & 0.9834 / 0.9834 \\
    \bottomrule
  \end{tabular}
  \end{sc}
  \end{scriptsize}
\end{table}

To validate that the theoretical acceleration potential established in the sensitivity analysis is achievable in practice, we evaluate the actual classification performance of the proposed HGNN predictor.

Table~\ref{tab:active_set_metrics} details the Accuracy, Precision, and Recall metrics across three test systems. We explicitly distinguish between ``seen'' and ``unseen'' scenarios to assess the model's generalization capability.

Given that active constraints constitute only a tiny fraction of the total set yet strictly define the optimal solution, missing even a single binding constraint can severely stall convergence. Consequently, the Recall rate is as the most paramount metric. Table~\ref{tab:active_set_metrics} shows that the HGNN consistently achieves a Recall exceeding 98\% across all systems. This high coverage guarantees that the predictor captures nearly all binding constraints required to define the active manifold, allowing SKM to focus on the feasible boundary and minimizing the reliance on random exploration to recover missed information. These recall values provide empirical support for the coverage condition in Assumption~\ref{ass:active_covering}, namely $\mathcal{A}^* \subseteq \hat{\mathcal{A}}$, up to the small prediction error observed in finite test samples.

In addition to reliability, the method maintains consistently high precision from 96.5\% to 99.9\%, ensuring that the predicted candidate sets are compact with minimal redundancy. Such compactness is essential for efficiency as it prevents the dilution of sampling probability. Moreover, the framework demonstrates that HGNN remains robustness to topological changes. The negligible performance gap between seen and unseen scenarios indicates that the HGNN has captured the underlying physical laws governing constraint activation, rather than overfitting to specific patterns. This robustness ensures that AT-SKM sustains its acceleration performance even under unexpected line failures.

\subsection{Robustness to Active-Set Perturbations}
\label{appendix:active_set_perturbation}

We further analyze the robustness of the proposed active-set-guided sampling strategy when the predicted active set is imperfect. In practice, the prediction errors of an active-set predictor can be divided into two types: false negatives, where truly active constraints are dropped from the predicted pool, and false positives, where inactive constraints are added to the predicted pool. Both types of errors weaken the sampling guidance, but their effects are not symmetric.

To isolate these two effects, we conduct a two-dimensional perturbation experiment using the oracle active set. Starting from the true active set, we randomly drop a prescribed fraction of true active constraints and add a prescribed number of inactive constraints. Both the add and drop ratios are normalized by the size of the true active set. We then run the SKM layer with the perturbed active-set pool and report the mean number of inequality iterations.

\begin{table}[h]
\centering
\footnotesize
\setlength{\tabcolsep}{4pt}
\caption{Mean number of SKM inequality iterations under active-set perturbations on the IEEE 118-bus system. Rows denote the ratio of added inactive constraints, and columns denote the ratio of dropped true active constraints. Both ratios are normalized by the true active-set size.}
\label{tab:active_set_perturb_118}
\begin{sc}
\begin{tabular}{lccccccc}
\toprule
\multirow{2}{*}{Add Ratio} & \multicolumn{7}{c}{Drop Ratio} \\
\cmidrule(lr){2-8}
 & 0.00 & 0.05 & 0.10 & 0.15 & 0.20 & 0.25 & 0.50 \\
\midrule
0.0 & 15.68 & 18.04 & 20.02 & 22.71 & 24.41 & 24.71 & 29.61 \\
0.5 & 15.85 & 20.64 & 29.60 & 33.44 & 39.12 & 39.71 & 42.53 \\
1.0 & 16.37 & 21.41 & 29.04 & 35.27 & 37.91 & 42.79 & 64.02 \\
1.5 & 17.04 & 21.34 & 27.89 & 35.81 & 39.81 & 41.90 & 63.72 \\
2.0 & 17.64 & 22.10 & 30.32 & 36.28 & 41.06 & 42.42 & 63.08 \\
3.0 & 19.12 & 24.32 & 31.16 & 36.93 & 41.80 & 44.11 & 65.27 \\
\bottomrule
\end{tabular}
\end{sc}
\end{table}

\begin{table}[h]
\centering
\footnotesize
\setlength{\tabcolsep}{4pt}
\caption{Mean number of SKM inequality iterations under active-set perturbations on the IEEE 300-bus system. Rows denote the ratio of added inactive constraints, and columns denote the ratio of dropped true active constraints.}
\label{tab:active_set_perturb_300}
\begin{sc}
\begin{tabular}{lccccccc}
\toprule
\multirow{2}{*}{Add Ratio} & \multicolumn{7}{c}{Drop Ratio} \\
\cmidrule(lr){2-8}
 & 0.00 & 0.05 & 0.10 & 0.15 & 0.20 & 0.25 & 0.50 \\
\midrule
0.0 & 3.69 & 4.15 & 7.68 & 8.88 & 10.99 & 13.96 & 19.24 \\
0.5 & 3.69 & 4.51 & 9.65 & 10.39 & 14.43 & 16.55 & 23.97 \\
1.0 & 3.72 & 4.78 & 11.53 & 12.50 & 17.17 & 23.93 & 33.14 \\
1.5 & 3.77 & 5.25 & 12.03 & 13.27 & 20.15 & 24.80 & 39.51 \\
2.0 & 3.80 & 4.79 & 13.36 & 15.24 & 23.41 & 28.90 & 43.07 \\
3.0 & 4.04 & 5.79 & 17.90 & 21.70 & 34.99 & 40.32 & 69.75 \\
\bottomrule
\end{tabular}
\end{sc}
\end{table}

Tables~\ref{tab:active_set_perturb_118} and~\ref{tab:active_set_perturb_300} show that both false positives and false negatives degrade the efficiency of active-set-guided sampling. However, dropping true active constraints is generally more harmful than adding inactive constraints. On the IEEE 118-bus system, when no inactive constraints are added, increasing the drop ratio from $0$ to $0.5$ increases the mean iteration count from $15.68$ to $29.61$. In contrast, when no true active constraints are dropped, increasing the add ratio from $0$ to $3.0$ only increases the mean iteration count from $15.68$ to $19.12$. A similar trend appears on the IEEE 300-bus system, where the iteration count increases from $3.69$ to $19.24$ as the drop ratio grows from $0$ to $0.5$, but only from $3.69$ to $4.04$ as the add ratio grows from $0$ to $3.0$ without dropping true active constraints. For reference, the original T-SKM-Net requires $34.60$ inequality iterations on the IEEE 118-bus system and $24.72$ iterations on the IEEE 300-bus system.

This behavior is consistent with the SKM update rule. In each iteration, SKM samples a subset of constraints and projects onto the most violated constraint within that sampled subset. Therefore, inactive false-positive constraints do not necessarily cause harmful projections; their main effect is to dilute the probability of sampling truly active constraints. In contrast, false negatives remove truly active constraints from the guided sampling pool, forcing them to be recovered only through the uniform sampling branch.

Let $m$ denote the total number of constraints, $s$ the size of the predicted active-set pool, and $\beta_g$ and $\beta_u$ the guided and uniform sample sizes, respectively. For a true active constraint that remains in the predicted pool, its probability of being sampled is
\begin{equation}
p_{\mathrm{keep}}
=
\frac{\beta_g}{s}
+
\left(1-\frac{\beta_g}{s}\right)
\frac{\beta_u}{m-\beta_g}
>
\frac{\beta_g}{s}.
\end{equation}
If this active constraint is mistakenly dropped from the predicted pool, it can only be selected by the uniform branch:
\begin{equation}
p_{\mathrm{drop}}
=
\frac{\beta_u}{m-\beta_g}.
\end{equation}
Thus, adding $\Delta$ inactive constraints mainly dilutes the guided sampling probability from $\beta_g/s$ to $\beta_g/(s+\Delta)$. By contrast, dropping a true active constraint moves it from the guided branch to the uniform branch, reducing its sampling probability from approximately $\beta_g/s$ to $\beta_u/(m-\beta_g)$. Since in our setting the predicted active-set pool is much smaller than the full constraint set and the guided sampling ratio satisfies $\rho>1/2$, we have
\begin{equation}
\frac{\beta_u/(m-\beta_g)}{\beta_g/(s+\Delta)}
=
\frac{1-\rho}{\rho}
\cdot
\frac{s+\Delta}{m-\beta_g}
\ll 1.
\end{equation}
This explains why dropping true active constraints is more damaging than adding inactive constraints. At the same time, the perturbation results show that AT-SKM-Net remains robust under mild active-set prediction errors, and that the guided sampling strategy continues to provide acceleration even when the predicted pool contains moderate false positives.

\subsection{Transfer Learning and Pseudo-Label Active-Set Training}
\label{appendix:transfer_pseudo_label}

Although the experiments in the main paper use supervised training with solver-generated reference solutions, AT-SKM-Net does not fundamentally require exact optimal solutions for all training instances. The framework relies on two acceleration components: a warm-start candidate for equality projection and SKM iteration, and active-set scores for guided inequality sampling. These components can be trained or adapted with weaker supervision.

For the warm-start branch, prior work on T-SKM-Net has shown that the predictor can be trained without solver-generated ground-truth solutions by directly optimizing the objective together with constraint-violation penalties. For the active-set prediction branch, exact solver-derived active-set labels are also not strictly necessary. Since this branch is only used to bias the SKM sampling distribution toward likely active constraints, pseudo labels can be constructed from corrected solutions produced by the model itself. Specifically, for each target sample, the transferred model first produces a warm-start solution, the SKM layer then corrects it to a feasible solution, and the active constraints of this corrected solution are used as pseudo labels for retraining the active-set heads.

To evaluate this possibility, we consider two transfer settings: from the IEEE 57-bus system to the IEEE 118-bus system, and from the IEEE 118-bus system to the IEEE 300-bus system. In each setting, we first train a source model on the smaller system and transfer it to the larger target system. We then freeze the shared HGNN backbone and the warm-start output branch, and retrain only the active-set prediction heads using 5\% of the target $N$-1 scenarios. Importantly, these active-set heads are not trained with solver-derived active-set labels; instead, they use pseudo labels generated from the model's own corrected SKM outputs.

\begin{table}[h]
\centering
\scriptsize
\setlength{\tabcolsep}{5pt}
\caption{Transfer learning results with pseudo-label active-set training. Runtime values are reported in milliseconds. The notation $57{\rightarrow}118$ denotes transfer from the IEEE 57-bus system to the IEEE 118-bus system.}
\label{tab:transfer_pseudo_label}
\begin{sc}
\begin{tabular}{llccccccc}
\toprule
Transfer & Method & Eq. Proj. & Eq. Spd. & Ineq. Iter. & Ineq. Time & Ineq. Spd. & SKM Spd. & Gap (\%) \\
\midrule
$57{\rightarrow}118$ & T-SKM-Net &
$1.54 \pm 0.02$ & $1.00\times$ &
$57.31\,(148)$ & $5.93 \pm 7.07$ & $1.00\times$ & $1.00\times$ & $0.009$ \\
$57{\rightarrow}118$ & AT-SKM-AC &
$0.28 \pm 0.02$ & $5.50\times$ &
$29.28\,(59)$ & $1.96 \pm 1.58$ & $3.02\times$ & $3.33\times$ & $0.009$ \\
\midrule
$118{\rightarrow}300$ & T-SKM-Net &
$7.01 \pm 0.08$ & $1.00\times$ &
$30.83\,(218)$ & $2.06 \pm 2.53$ & $1.00\times$ & $1.00\times$ & $0.618$ \\
$118{\rightarrow}300$ & AT-SKM-AC &
$0.74 \pm 0.05$ & $9.47\times$ &
$6.30\,(18)$ & $0.51 \pm 0.54$ & $4.03\times$ & $7.26\times$ & $0.590$ \\
\bottomrule
\end{tabular}
\end{sc}
\end{table}

As shown in Table~\ref{tab:transfer_pseudo_label}, AT-SKM-AC retains substantial acceleration even when the active-set heads are trained only with pseudo labels. In the $57{\rightarrow}118$ transfer setting, the average number of inequality iterations decreases from $57.31$ to $29.28$, and the worst-case iteration count decreases from $148$ to $59$. In the more challenging $118{\rightarrow}300$ setting, the average iteration count decreases from $30.83$ to $6.30$, and the worst-case iteration count decreases from $218$ to $18$. These reductions show that pseudo-label active-set learning still provides useful sampling guidance after transfer to a larger system.

The runtime results show the same trend. In the $57{\rightarrow}118$ setting, AT-SKM-AC achieves a $5.50\times$ speedup in equality projection, a $3.02\times$ speedup in inequality iteration, and a $3.33\times$ total SKM speedup. In the $118{\rightarrow}300$ setting, the corresponding speedups are $9.47\times$, $4.03\times$, and $7.26\times$. Meanwhile, the optimality gap remains comparable to T-SKM-Net: $0.009\%$ versus $0.009\%$ for $57{\rightarrow}118$, and $0.590\%$ versus $0.618\%$ for $118{\rightarrow}300$. Since the final outputs are corrected by the projection and SKM layers, both methods preserve strict constraint feasibility after correction.

These results suggest that supervised ground-truth solutions are a training choice in the current implementation rather than a fundamental requirement of AT-SKM-Net. When exact solver-generated labels are expensive or insufficient on larger systems, the model can still transfer from a smaller system and use self-generated pseudo labels to train the active-set predictor, thereby retaining much of the acceleration benefit while preserving strict feasibility.

\end{document}